\documentclass{article}
\pdfoutput=1
\usepackage{iclr2027_conference,times}

\usepackage{amsmath,amsfonts,bm}

\def\eqref#1{equation~\ref{#1}}

\def\1{\bm{1}}

\DeclareMathAlphabet{\mathsfit}{\encodingdefault}{\sfdefault}{m}{sl}
\SetMathAlphabet{\mathsfit}{bold}{\encodingdefault}{\sfdefault}{bx}{n}

\usepackage{amsmath,amssymb,amsthm,booktabs,graphicx,microtype,multirow,xcolor,hyperref,url}
\usepackage{pifont}
\usepackage{colortbl}
\definecolor{oursbg}{RGB}{210,232,255}
\definecolor{blockbg}{HTML}{F2F2F2}
\definecolor{gainc}{HTML}{009E73}
\definecolor{lossc}{HTML}{0072B2}
\newcommand{\method}{CounterCredit}
\newcommand{\ind}{\mathbb{I}}

\newtheorem{lemma}{Lemma}
\newtheorem{proposition}{Proposition}
\newtheorem{corollary}{Corollary}
\usepackage{algorithm,algpseudocode}
\graphicspath{{figs/}}
\usepackage{wrapfig}
\usepackage{float}
\makeatletter\g@addto@macro\normalsize{\abovedisplayskip 4pt plus1pt minus2pt\belowdisplayskip\abovedisplayskip\abovedisplayshortskip 2pt\belowdisplayshortskip 2pt}\makeatother

\usepackage{listings}
\usepackage[most]{tcolorbox}
\definecolor{hookgray}{gray}{0.55}\definecolor{ansblue}{HTML}{1F4E9C}\definecolor{grdorange}{HTML}{B85C00}
\definecolor{pbblue}{HTML}{EAF1FB}\definecolor{pbblueT}{HTML}{C5D8F2}
\definecolor{pbgreen}{HTML}{EAF5EC}\definecolor{pbgreenT}{HTML}{C6E3CC}
\definecolor{pborange}{HTML}{FCF1E6}\definecolor{pborangeT}{HTML}{F5D6B8}
\definecolor{cbgray}{HTML}{F5F5F5}\definecolor{cbdark}{HTML}{404040}
\newtcolorbox{promptbox}[3][]{enhanced,breakable,colback=#2,colframe=#3,boxrule=0.6pt,arc=2mm,left=5pt,right=5pt,top=9pt,bottom=4pt,title=#1,fonttitle=\bfseries\small,coltitle=black,attach boxed title to top left={xshift=4mm,yshift=-3mm},boxed title style={colback=#3,colframe=#3,arc=1mm,boxrule=0pt,left=4pt,right=4pt,top=2pt,bottom=2pt}}
\newtcolorbox{casebox}[2][]{enhanced,breakable,colback=cbgray,colframe=cbdark,boxrule=0.8pt,arc=1.5mm,left=5pt,right=5pt,top=5pt,bottom=5pt,title=#2,fonttitle=\bfseries\small,coltitle=white,colbacktitle=cbdark,titlerule=0pt,#1}
\newtcolorbox{summarybox}{enhanced,colback=cbgray,colframe=cbdark,boxrule=0.6pt,arc=1.5mm,left=5pt,right=5pt,top=3pt,bottom=3pt,title=Summary,fonttitle=\bfseries\small,coltitle=white,colbacktitle=cbdark,titlerule=0pt,toptitle=1pt,bottomtitle=1pt,before skip=4pt,after skip=4pt}
\newcommand{\dashsep}{\par\noindent\tikz{\draw[dash pattern=on 2pt off 2pt,gray!60,line width=0.4pt] (0,0) -- (\linewidth,0);}\par}

\iclrfinalcopy

\title{When Should a VLM Look?\\
Paying Only for Visual Calls That Were Needed and Used}
\author{
Kunyu Peng$^{1}$\thanks{Equal contribution.} \quad Junming Liu$^{1,3}$\footnotemark[1] \quad Ruiqi He$^{2}$ \\
\bfseries Qingzhuo Wang$^{1}$ \quad Jianzhong Qi$^{3}$ \quad Xianhui Liu$^{1}$ \\[0.45em]
{\normalfont\normalsize $^{1}$Tongji University \quad $^{2}$Nankai University \quad $^{3}$University of Melbourne} \\[0.3em]
{\normalfont\footnotesize\ttfamily pky@tongji.edu.cn \quad junming.liu.1@student.unimelb.edu.au} \\[0.1em]
{\normalfont\footnotesize\ttfamily heruiqi@mail.nankai.edu.cn \quad 2534123@tongji.edu.cn} \\[0.1em]
{\normalfont\footnotesize\ttfamily jianzhong.qi@unimelb.edu.au \quad lxh@tongji.edu.cn}
}

\begin{document}
\maketitle
\lhead{Preprint.}
\begin{abstract}
Vision-language agents that crop and zoom are trained with rewards that credit a successful tool call, yet a successful call does not show that the model needed to look or used the pixels it received. On our cold-start checkpoint only 10\% to 12\% of visual calls were both needed and used, and released agents make spurious calls 36\% to 87\% of the time on individual benchmarks. Outcome rewards, judge rewards, and branch probes each observe one side of this failure, and about two thirds of what an outcome reward pays goes to calls that were neither needed nor used. \textbf{\method{}} asks both questions of every image-returning call at its realized pre-call state, using the policy's own gold-answer score. A decision value compares the realized visual branch with answering immediately; an evidence value compares the returned crop with random same-size patches substituted into the same call. A call verified on both earns cashback and every other executed call pays rent; the price is bounded so that every correct trajectory outranks every wrong one, and a dual-channel GRPO advantage keeps the price in its own units. From the same cold start, prompt pool, and budget, \textbf{\method{}} reaches 89.5\% on V$^\ast$, 80.2\% on HR-Bench-4K, and 76.4\% on HR-Bench-8K, 6.3 to 9.4 points above outcome-only GRPO at 1.78 against 1.84 calls per question, and lowers the spurious-call rate to 31\% to 36\%, the lowest among the agents evaluated. The same recipe lifts a Qwen3-VL-8B base from 75.4 to 80.8 on average.
\end{abstract}
\begin{figure}[H]
\vspace{-0.6em}
\centering
\includegraphics[width=\linewidth]{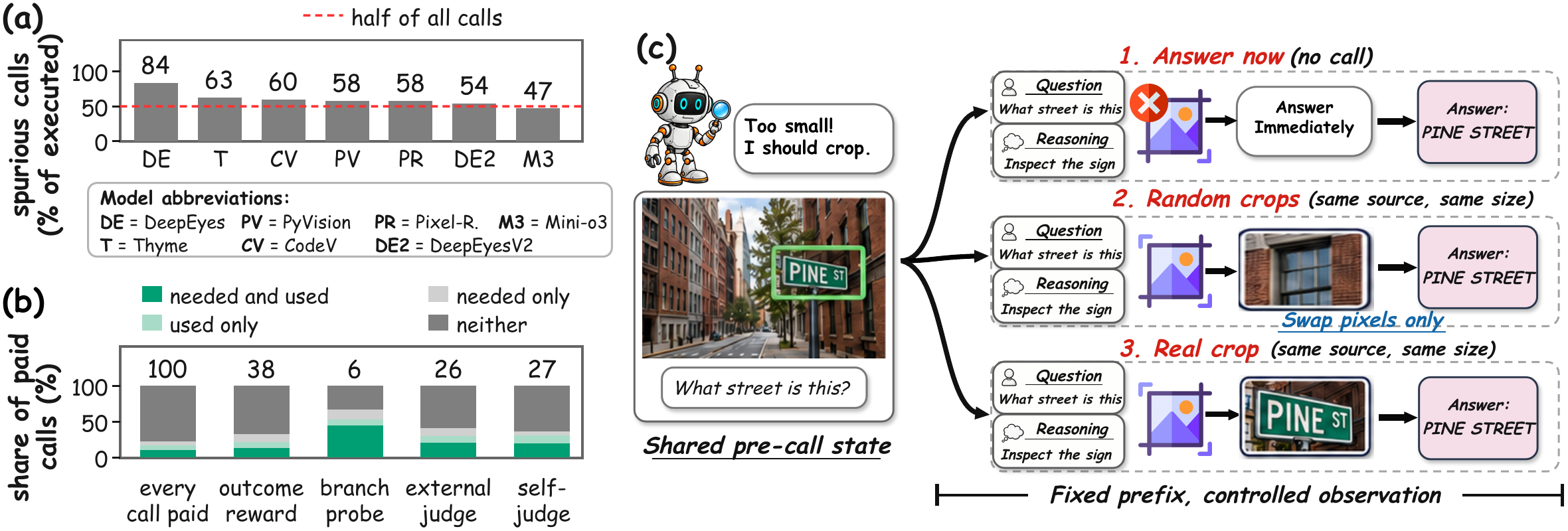}
\vspace{-0.4em}
\caption{\textbf{Overview of spurious visual calls.} (a) Spurious-call rates of released visual agents. (b) Share of calls paid by different reward schemes on the cold-start checkpoint, split by whether they were needed and used; the number above each bar is the calls paid per 100 executed. Both panels average V$^\ast$, HR-Bench-4K, and HR-Bench-8K. (c) Fixed-prefix counterfactuals that separate call necessity from evidence use while holding the realized pre-call state fixed across continuations.}
\label{fig:fake_call}
\vspace{-0.6em}
\end{figure}

\section{Introduction}
\label{sec:intro}

Multimodal models have made remarkable progress in visual understanding, yet VLMs still lag text-only language models as agents that reason through tools and acquire information on their own~\citep{Liu_2025_VisualAgentBench,Wu_2026_VTool-R1,Zhang_2026_MageBench,Zheng_2026_DeepEyes,Sathish_2026_SVRL}. One possible bottleneck is that pretrained visual knowledge is not fully unlocked during post-training~\citep{Xiao_2026_Staying_VIGILant}: a VLM may understand an image when it is given one, yet fail to acquire visual information when it must decide what to inspect~\citep{Zhu_2026_ACTIVE-o3}.
Tool use offers a natural mechanism to bridge this gap: an agent can inspect image regions with a crop tool. Yet trajectory-level success does not reveal whether a visual interaction was useful~\citep{Chen_2026_TUMIX}.

Our key observation is that a successful visual tool call does not show that the decision depended on the look or on the pixels it returned.
Conventional outcome-based evaluation observes only final correctness and therefore cannot distinguish genuinely useful visual interactions from calls that were unnecessary or merely performative~\citep{Hou_2026_CodeV,Chng_2026_SenseSearch,Zhu_2026_ACTIVE-o3,Lai_2026_Mini-o3}.
We refer to calls exhibiting either failure as \emph{spurious}. As illustrated in Figure~\ref{fig:fake_call}, this problem is widespread: averaged over three benchmarks, 47\% to 84\% of the calls made by released visual agents are spurious, and roughly two thirds of the credit assigned by outcome-based rewards goes to calls that are neither needed nor used. Judge-based rewards assess whether an observation appears useful~\citep{Wang_2026_MoCA}, while efficiency penalties merely discourage excess calls~\citep{Wang_2026_AdaTooler-V}; neither establishes that the call was necessary or that its evidence changed the decision. Existing objectives thus conflate \emph{decision value} with \emph{evidence value}, effectively paying agents for looking without verifying that they have seen~\citep{Zhao_2026_Learning}.

We introduce \textbf{\method}, a counterfactual framework for assigning credit to visual interactions. For each visual call, we fix the pre-call prefix and score three continuations, answering without the call, the call with a matched random observation, and the call with the real observation, which separately measure whether the call is necessary and whether its visual evidence affects the decision. We use these quantities to construct a cashback-and-rent reward: a call on a correct trajectory earns cashback only when looking improved the answer and the improvement came through its returned pixels, and every other executed call pays rent. We then optimize this reward with a dual-channel GRPO~\citep{Shao_2024_GRPO} objective that normalizes the outcome and the price in separate channels, so that the price keeps its own scale against the outcome.

\paragraph{Contributions.}
\begin{enumerate}
\item We identify \emph{spurious visual calls} as a possible bottleneck for multimodal agentic reasoning, introduce two counterfactual values that characterize four types of visual interactions, and show that existing outcome-based rewards assign much of their credit to calls with little or no decision benefit, even with  trajectories that reach the correct final answer.

\item We develop \textbf{\method}, a cashback-and-rent credit mechanism with a dual-channel GRPO objective that pays verified evidence use and charges unproductive calls, with an outcome-dominance guarantee that preserves the ordering induced by correctness.

\item \method\ averages 78.5 on the four perception benchmarks, 2.5 points above Mini-o3, the strongest tool agent on these four, and has the lowest spurious-call rate on V$^\ast$, HR-Bench-4K, and HR-Bench-8K, 31\% to 36\%, 9 to 15 points below Mini-o3. It leads every tool agent on all seven benchmarks and remains effective with the Qwen3-VL-8B backbone.
\end{enumerate}

\section{Related Work}

\paragraph{Learning visual tool use.}
V$^\ast$ studies guided visual search~\citep{wu2024vstar}; DeepEyes learns interleaved visual reasoning through RL~\citep{Zheng_2026_DeepEyes}, Pixel Reasoner adds curiosity rewards~\citep{su2025pixelreasoner}, Thyme enables executable image processing~\citep{zhang2025thyme}, and Mini-o3, our cold-start source and baseline, trains long multi-turn search with an outcome-only reward from an LLM judge~\citep{Lai_2026_Mini-o3}. VTool-R1~\citep{Wu_2026_VTool-R1}, ACTIVE-o3~\citep{Zhu_2026_ACTIVE-o3}, and Chain-of-Focus~\citep{zhang2025chainoffocus} train the zoom decision itself, each rewarding the outcome of the trajectory that contains it. Outcome success alone does not establish evidence use.

\paragraph{Rewards for visual actions.}
CodeV's TAPO judges tool-output relevance~\citep{Hou_2026_CodeV}; FaithEyes uses self-judging to score process images~\citep{wang2026faitheyes}. SCCM rewards externally judged sufficiency of the  evidence~\citep{liu2025sccm}, and ViRL shapes step rewards from ground-truth visual rationales~\citep{wang2025virl}. AdaTooler-V scales a tool-use reward component by a per-sample tool benefit score estimated from accuracy with and without the tool~\citep{Wang_2026_AdaTooler-V}. TACO probes answers before and after a tool branch~\citep{feng2026taco}, AXPO resamples the call and its continuation from a fixed thinking prefix and scores the branch by its outcome~\citep{kang2026axpo}, and IGPO rewards sequential belief updates~\citep{wang2026igpo}. TACO and the tool benefit score measure whether the branch helped, and CauAudit measures whether the pixels mattered but only as a post hoc diagnostic, so none measures both at the same realized pre-call state during training.

\paragraph{Credit assignment and causal attribution.}
PACR rewards increases in gold-answer probability along reasoning traces~\citep{yoon2025pacr}; GiGPO estimates step advantages by grouping repeated environment states~\citep{feng2025gigpo}. \citet{ma2026vision} analyze  training effects with a no-tool RL control, finding that training mainly reduces tool-induced harm. Post hoc audits find the same pattern: most questions a tool agent answers need no tool~\citep{guo2026toolbenefit}, a placeholder in place of the returned image leaves aggregate accuracy intact~\citep{shao2026textcall}, and models often ignore the evidence they fetch~\citep{zhang2026insensitivity}. Counterfactual explanation tests~\citep{turpin2023unfaithful} and causal mediation~\citep{pearl2001direct} provide related foundations; CauAudit diagnoses visual-evidence dependence through interventions~\citep{wang2026cauaudit}. \method{} brings such interventions into training and asks at the same pre-call state whether the look was needed and whether the pixels were used.

\section{CounterCredit}
\label{sec:method}

A successful visual call leaves two questions unresolved: \textbf{was the look needed, and were the returned pixels used?} CounterCredit answers both at the realized pre-call state, Figure~\ref{fig:method}. \emph{Decision value} $D_i$ is the gold-answer score gain of the realized branch over answering at once, the total value of a look; \emph{evidence value} $E_i$ is its gain over random patches substituted into the same call, the part carried by the pixels. A call verified on both earns cashback; every other executed call pays rent.

\subsection{Setting and the counterfactual probe}
\label{sec:setting}
Let $x=(I,y)$ denote an example with image $I$ and gold answer $y$; the prompt includes the question and, for multiple-choice questions, the options the benchmark supplies, given verbatim in Appendix~\ref{app:prompts}. Policy $\pi_\theta$ generates a trajectory $\tau$ interleaving text with $n(\tau)$ executed visual calls $c_1,\dots,c_{n(\tau)}$. The policy has one visual tool, a crop of the image. Each call $c_i$ returns an image crop $o_i$, and the final answer receives outcome reward $r_{\mathrm{out}}(\tau)\in\{0,1\}$.

Let $h_i$ be the exact prefix immediately before $c_i$, including the prompt, previous interactions, and generated reasoning; Appendix~\ref{app:prompts} gives the turn structure and Appendix~\ref{app:case} four trajectories. Each probed call thus separates the pre-call context $h_i$, the call text $c_i$, and the returned pixels $o_i$; all three continuations below retain $h_i$.

\begin{figure}[t]
\centering
\includegraphics[width=\linewidth]{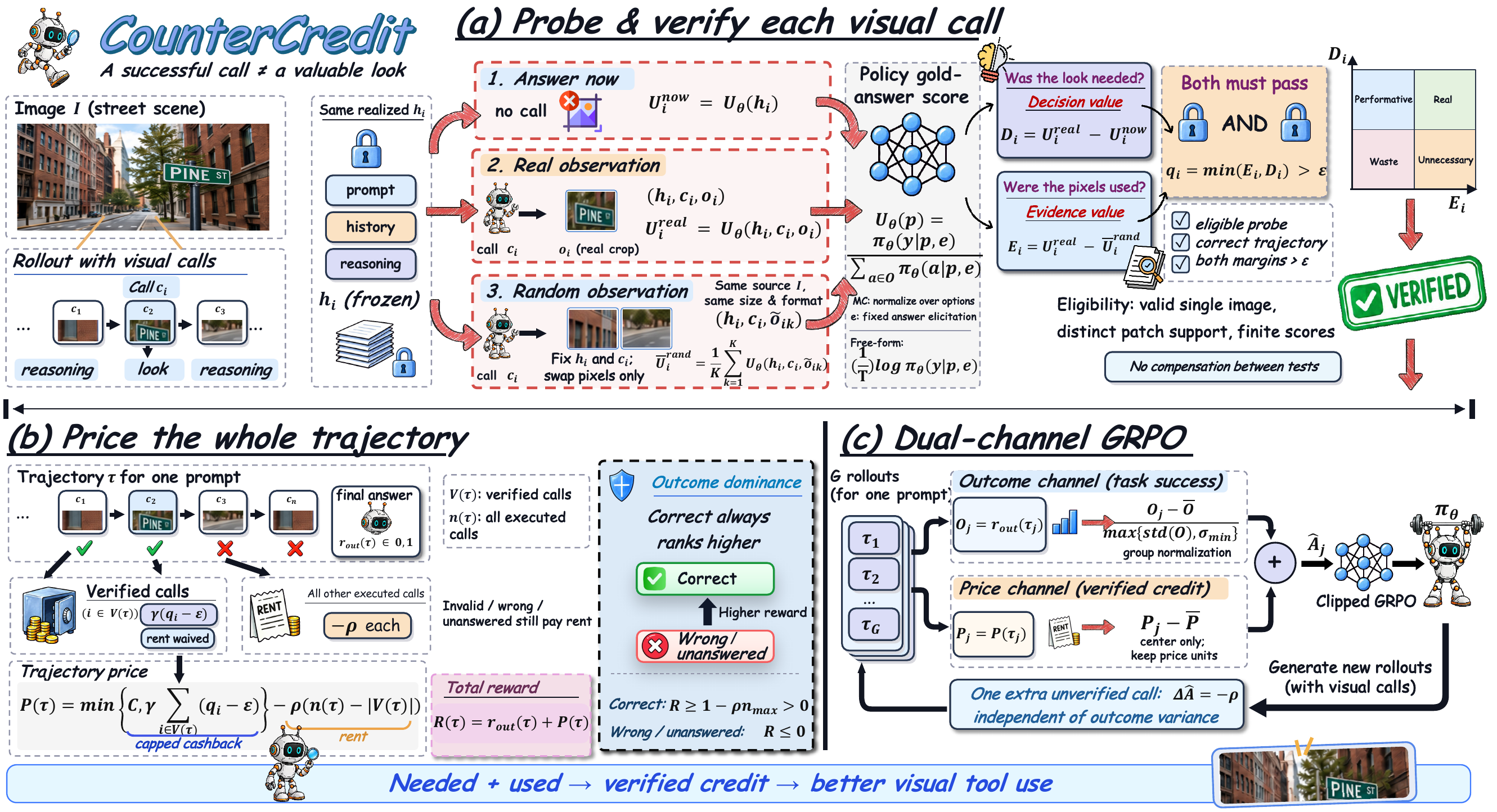}
\caption{\textbf{\method{} overview}. \textbf{(a)} Each visual call is probed from its realized pre-call state with three continuations, \textbf{\ding{182} answer now, \ding{183} real observation, \ding{184} random observation}, $D_i$ asks whether the look was needed, $E_i$ whether the pixels were used, and a call is verified when both exceed $\varepsilon$ on a correct trajectory. \textbf{(b)} The price adds capped cashback on verified calls and rent on other executed calls. \textbf{(c)} Dual-channel GRPO normalizes the outcome channel and only centers the price channel.}
\label{fig:method}
\end{figure}

All counterfactuals begin from the same realized $h_i$. The tool is a crop, and only a crop, because the evidence value needs a matched null return: a crop returns only pixels from $I$, so random same-size patches from $I$ serve as that reference, while a tool that returns anything else has no such reference yet. We compare three continuations of $h_i$, shown in Figure~\ref{fig:method}(a): the answer-now continuation elicits an answer directly from $h_i$; the real-observation continuation $(h_i,c_i,o_i)$ appends the call the policy actually issued and the crop it received; and each of $K$ random-observation continuations $(h_i,c_i,\tilde{o}_{ik})$ appends the same $c_i$ and an unscreened same-size patch sampled uniformly at random from the original image and rendered in the same format as the real observation.

For each of these prefixes $p$, define the policy's gold-answer score as
\begin{equation}
U_\theta(p)=\frac{\pi_\theta(y\mid p,e)}{\sum_{a\in\mathcal{O}}\pi_\theta(a\mid p,e)},
\label{eq:U}
\end{equation}
where $e$ is a fixed answer elicitation appended as a user turn, given verbatim in Appendix~\ref{app:prompts}, and $\mathcal{O}$ is the set of option letters of the question, so the score renormalizes the gold-option probability over them. For free-form tasks, we instead use the mean log-probability per token of the reference answer $y$ under teacher forcing, and its differences are gated by a deadzone calibrated separately on free-form calls, Appendix~\ref{app:config}. Probe scores are evaluated under the current policy with gradients stopped.

\subsection{Was the look needed? The decision value}
\label{sec:decision}
Turning to the first question, \emph{was the look needed?} A correct answer after a call does not establish that the call was needed. The answer may already be available in $h_i$ from the full image or the reasoning written before the call. To answer the first question, we compare the gold-answer score of the realized visual branch with that of answering immediately from the same state. The decision value is
\begin{equation}
U^{\mathrm{real}}_i=U_\theta(h_i,c_i,o_i),\qquad U^{\mathrm{now}}_i=U_\theta(h_i),\qquad D_i:=U^{\mathrm{real}}_i-U^{\mathrm{now}}_i.
\label{eq:decision}
\end{equation}
A positive $D_i$ means that the realized visual branch increased the gold-answer score relative to answering from $h_i$. Any additive score contribution that depends only on the shared pre-call prefix cancels in $D_i$, as formalized in Lemma~\ref{lem:shared} of Appendix~\ref{app:proofs}.

\subsection{Was the look used? The evidence value}
\label{sec:values}
\begin{wrapfigure}{r}{0.36\linewidth}
\vspace{-1.2em}
\centering
\includegraphics[width=\linewidth]{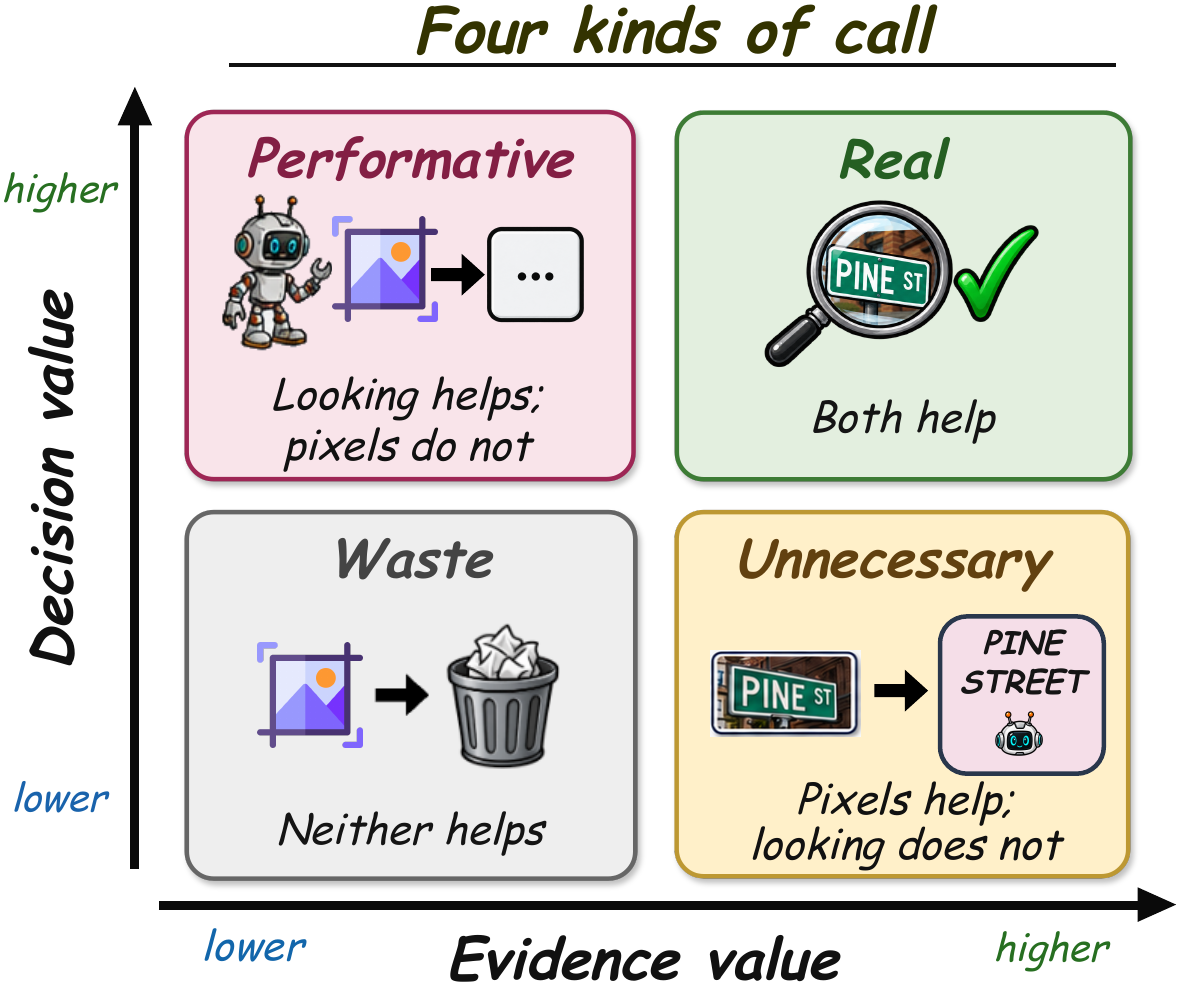}
\caption{\textbf{The four call regions} by evidence value $E_i$ and decision value $D_i$ for each visual call.}
\label{fig:audit}
\vspace{-0.6em}
\end{wrapfigure}
Turning to the second question, \emph{were the returned pixels used?} A useful branch does not imply that its returned pixels caused the improvement. The call itself, or simply receiving an observation, may affect the gold-answer score even when the visual content is uninformative. To answer the second question, we hold the prefix $h_i$ and the call $c_i$, including its text arguments, fixed and replace only the returned pixels with random patches. The evidence value is
\begin{equation}
\bar{U}^{\mathrm{rand}}_i=\frac{1}{K}\sum_{k=1}^{K}U_\theta(h_i,c_i,\tilde{o}_{ik}),\quad E_i:=U^{\mathrm{real}}_i-\bar{U}^{\mathrm{rand}}_i.
\label{eq:evidence}
\end{equation}
$E_i$ measures the gain in gold-answer score from the returned pixels relative to the mean score of the random-observation continuations. $D_i$ compares the real crop with answering now, and $E_i$ compares it with random patches of the same image, so $D_i>0$ says the look improved the answer and $E_i>0$ says the improvement came from the returned pixels. Needed and used denote score gains beyond the deadzone over answering now and over random patches, including added confidence in answers the policy would already get right. Both conditions must hold, as formalized in Section~\ref{sec:billing}. Table~\ref{tab:probe-sens} in Appendix ~\ref{sec:table13} reports used-label agreement with the default across reference sample counts, patch locations, and reference types.

\subsection{Both must pass: verification and pricing}
\label{sec:billing}
The pair $(E_i,D_i)$ induces the credit geometry of Figure~\ref{fig:audit}. Calls with $E_i>0$ and $D_i>0$ are \emph{real}: looking improved the answer and the pixels mattered. Calls with $D_i>0$ but $E_i\leq0$ are \emph{performative}: random patches would have helped as much. Calls with $E_i>0$ but $D_i\leq0$ are \emph{unnecessary}: the pixels are informative, but the policy could already answer. Calls with neither effect are \emph{waste}. Both failure regions are populated on the cold-start checkpoint, Figure~\ref{fig:fake_call}(b), and CounterCredit pays only the real region, with credit limited by both references. Proofs of the propositions below are in Appendix~\ref{app:proofs}, together with the assumptions and scope of each result.

Paying a call requires one number that summarizes both effects, so we define its credit as
\begin{equation}
q_i=\min\{E_i,D_i\}.
\label{eq:q}
\end{equation}
The following proposition justifies the minimum.
\begin{proposition}[Robust advantage over two references]
\label{prop:robust}
For $\lambda\in[0,1]$ let $B(\lambda)=\lambda U^{\mathrm{now}}_i+(1-\lambda)\bar{U}^{\mathrm{rand}}_i$, the expected score of a reference that answers at once with probability $\lambda$ and otherwise makes the call with a random patch. Then $\inf_{\lambda}[U^{\mathrm{real}}_i-B(\lambda)]=U^{\mathrm{real}}_i-\max\{U^{\mathrm{now}}_i,\bar{U}^{\mathrm{rand}}_i\}=q_i$, and $q_i=\sup\{m:U^{\mathrm{real}}_i\geq U^{\mathrm{now}}_i+m,\ U^{\mathrm{real}}_i\geq\bar{U}^{\mathrm{rand}}_i+m\}$.
\end{proposition}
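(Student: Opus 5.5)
The argument is elementary; the plan is to exploit linearity of $B(\lambda)$ in $\lambda$ for the first equality and then convert the second characterization into two linear constraints on $m$.

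\textbf{Step 1 (the infimum).} For fixed $U^{\mathrm{real}}_i$, the map $\lambda\mapsto U^{\mathrm{real}}_i-B(\lambda)$ is affine in $\lambda$ on the compact interval $[0,1]$. Its infimum is therefore attained at an endpoint. At $\lambda=1$ it equals $U^{\mathrm{real}}_i-U^{\mathrm{now}}_i=D_i$ by \eqref{eq:decision}, and at $\lambda=0$ it equals $U^{\mathrm{real}}_i-\bar U^{\mathrm{rand}}_i=E_i$ by \eqref{eq:evidence}. Hence the infimum is $\min\{D_i,E_i\}$.

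To make the endpoint claim explicit rather than appeal to a general fact, we can write $B(\lambda)\le \lambda M+(1-\lambda)M=M$ with $M=\max\{U^{\mathrm{now}}_i,\bar U^{\mathrm{rand}}_i\}$, with equality at the endpoint achieving $M$. It follows that
\[
\inf_\lambda\bigl[U^{\mathrm{real}}_i-B(\lambda)\bigr]=U^{\mathrm{real}}_i-M .
\]
Since $U^{\mathrm{real}}_i-\max\{a,b\}=\min\{U^{\mathrm{real}}_i-a,\,U^{\mathrm{real}}_i-b\}$, this equals $\min\{D_i,E_i\}=q_i$ by \eqref{eq:q}. This gives both equalities in the first chain.

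\textbf{Step 2 (the supremum characterization).} The two constraints $U^{\mathrm{real}}_i\ge U^{\mathrm{now}}_i+m$ and $U^{\mathrm{real}}_i\ge \bar U^{\mathrm{rand}}_i+m$ are equivalent to $m\le D_i$ and $m\le E_i$, that is, $m\le\min\{D_i,E_i\}$. The feasible set is thus the closed half-line $(-\infty,q_i]$. It is nonempty because all scores are finite real numbers, so its supremum is $q_i$ and is in fact attained.

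\textbf{Main obstacle.} There is no real obstacle, only bookkeeping. The points to state carefully are that the scores $U_\theta$ are finite (they lie in $[0,1]$ for the option-renormalized score, and are finite log-probabilities in the free-form case) so that all differences are well-defined, and that the infimum over $[0,1]$ is attained. The only identity used is $x-\max\{a,b\}=\min\{x-a,x-b\}$.
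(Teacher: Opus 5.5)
Your proof is correct and follows the paper's argument: the paper also writes $U^{\mathrm{real}}_i-B(\lambda)=\lambda D_i+(1-\lambda)E_i$, notes it is affine so the infimum sits at an endpoint and equals $\min\{D_i,E_i\}=U^{\mathrm{real}}_i-\max\{U^{\mathrm{now}}_i,\bar{U}^{\mathrm{rand}}_i\}$, and reduces the supremum to the half-line $(-\infty,q_i]$. Your bound $B(\lambda)\le\max\{U^{\mathrm{now}}_i,\bar{U}^{\mathrm{rand}}_i\}$, with equality at an endpoint, is only a self-contained justification of the endpoint step, which the paper instead handles by a case split on which reference is larger.
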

The minimum is therefore the advantage that survives the least favourable weighting of the two references. Verification requires this advantage to exceed a deadzone $\varepsilon$, fixed before RL by a null calibration on contrasts built from random patches alone and held fixed during training; Appendix~\ref{app:config} gives the procedure, the false-verification rate at each grid value, and the separate values used for multiple-choice and free-form calls. The \emph{spurious} calls $\{i:q_i\leq\varepsilon\}$ comprise the performative, unnecessary, and waste regions of Figure~\ref{fig:audit} together with the band of the real region within $\varepsilon$ of either axis. Let $a_i\in\{0,1\}$ indicate probe eligibility, requiring a valid single-image return, distinct random-patch support, and finite probe scores. Call $i$ is verified when
\begin{equation}
v_i=r_{\mathrm{out}}(\tau)\,a_i\,\ind[q_i>\varepsilon].
\label{eq:verify}
\end{equation}
Verification alone does not discourage indiscriminate tool use, so every unverified executed call pays rent $\rho$, including failed parses and calls on wrong or unanswered trajectories. Verified calls are exempt and receive cashback proportional to their margin above the deadzone, capped at $C$ per trajectory, Figure~\ref{fig:method}(b). With $\mathcal{V}(\tau)=\{i:v_i=1\}$,
\begin{equation}
P(\tau)=\min\Big\{C,\ \gamma\sum_{i\in\mathcal{V}(\tau)}(q_i-\varepsilon)\Big\}-\rho\big(n(\tau)-|\mathcal{V}(\tau)|\big),\qquad R(\tau)=r_{\mathrm{out}}(\tau)+P(\tau).
\label{eq:reward}
\end{equation}
For an eligible call on a correct trajectory, crossing $\varepsilon$ removes the rent, so payment jumps by $\rho$ at the deadzone and then grows with $q_i$ until the cap binds. The price is added to the outcome reward, so it could in principle rank a cheap wrong trajectory above an expensive correct one. The following proposition rules this out.
\begin{proposition}[Outcome dominance]
\label{prop:dominance}
Suppose every trajectory executes at most $n_{\max}$ calls and $\rho\,n_{\max}<1$. A correct trajectory has $R(\tau)\geq1-\rho\,n_{\max}$, whereas a wrong or unanswered trajectory has $R(\tau)=-\rho\,n(\tau)\leq0$, so every correct trajectory outranks any wrong or unanswered one by at least $1-\rho\,n_{\max}$, regardless of their respective call counts.
\end{proposition}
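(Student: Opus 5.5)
The plan is to read both bounds directly off the definition of $R(\tau)=r_{\mathrm{out}}(\tau)+P(\tau)$ in equation~\ref{eq:reward}, splitting on the value of $r_{\mathrm{out}}(\tau)\in\{0,1\}$. The only structural facts needed are that the cashback term is nonnegative, that it vanishes whenever no call is verified, and that the rent term is at most $\rho\,n_{\max}$ in magnitude.

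\emph{Wrong or unanswered trajectories.} Here $r_{\mathrm{out}}(\tau)=0$, so by equation~\ref{eq:verify} every call has $v_i=0$ and hence $\mathcal{V}(\tau)=\emptyset$. The cashback term is therefore $\min\{C,0\}=0$, using $C\ge 0$, and the rent term charges all $n(\tau)$ executed calls. This gives $R(\tau)=-\rho\,n(\tau)$ exactly, and since $\rho\ge 0$ and $n(\tau)\ge 0$, we get $R(\tau)\le 0$. Unanswered trajectories fall under the same case, because they receive outcome reward $0$.

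\emph{Correct trajectories.} Here $r_{\mathrm{out}}(\tau)=1$. Every verified call has $q_i>\varepsilon$ by definition, so each summand $q_i-\varepsilon$ is positive. With $\gamma\ge 0$ and $C\ge 0$, the capped cashback $\min\{C,\gamma\sum_{i\in\mathcal{V}}(q_i-\varepsilon)\}$ is therefore nonnegative. The rent is $\rho(n(\tau)-|\mathcal{V}(\tau)|)\le\rho\,n(\tau)\le\rho\,n_{\max}$. Together these give $R(\tau)\ge 1-\rho\,n_{\max}$, and this is positive under the assumption $\rho\,n_{\max}<1$.

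\emph{The gap.} For any correct $\tau$ and any wrong $\tau'$, the two cases give $R(\tau)-R(\tau')\ge 1-\rho\,n_{\max}+\rho\,n(\tau')\ge 1-\rho\,n_{\max}>0$. No assumption links the two call counts, which is why the ordering holds regardless of them.

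The only point needing care is the sign convention on the hyperparameters. I will state explicitly that $\rho,\gamma,C\ge 0$, which the paper implicitly assumes when it calls these quantities rent, cashback and a cap. I will also note that a verified call can never contribute negative cashback, because verification requires $q_i>\varepsilon$. Beyond this, the argument is direct substitution into equations~\ref{eq:verify} and~\ref{eq:reward}.
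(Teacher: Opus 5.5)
Your proposal is correct and matches the paper's proof step for step. Both arguments bound the capped cashback below by $0$ because verified calls have $q_i>\varepsilon$, bound the rent by $\rho\,n_{\max}$, use Equation~\ref{eq:verify} to force $v_i=0$ and hence zero cashback on wrong or unanswered trajectories, and then subtract, with the leftover $\rho\,n(\tau')\geq 0$ absorbed. Stating $\rho,\gamma,C\geq 0$ explicitly is a harmless addition that the paper leaves implicit.
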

The bound holds because cashback is nonnegative and restricted to correct trajectories while total rent is at most $\rho\,n_{\max}$; with $\rho=0.05$ and $n_{\max}=6$ the margin is $0.7$, and $C=0.7$ keeps correct-trajectory rewards in $[0.7,1.7]$; the guarantee weakens linearly in the call budget and vanishes at $\rho\,n_{\max}=1$. Rent also covers verification noise. With $K=3$ the random-patch mean fluctuates, so $q_i$ is a noisy estimate of the credit $q^\star_i$ that the same call would receive against the expected patch score, and a crop with $q^\star_i\leq\varepsilon$ can still clear the deadzone by chance. The following proposition shows that the rent makes such luck unprofitable.
\begin{proposition}[Chance verification does not pay]
\label{prop:null}
For an executed call let $A_i=\{q^\star_i\leq\varepsilon\}$ be the event that its value at the reference mean falls short of the verification margin, and suppose $\Pr(A_i)>0$ and $\Pr(v_i=1\mid A_i)\leq\alpha\in[0,1)$. Its uncapped price satisfies $\mathbb{E}[X_i\mid A_i]\leq\alpha\gamma(1-\varepsilon)-(1-\alpha)\rho$, which is negative whenever the rent satisfies $\rho>\frac{\alpha}{1-\alpha}\gamma(1-\varepsilon)$.
\end{proposition}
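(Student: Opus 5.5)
We first fix the object $X_i$, since the statement leaves it implicit. For an executed call, define its uncapped price contribution as $X_i=v_i\,\gamma\,(q_i-\varepsilon)-(1-v_i)\rho$. This is the term call $i$ contributes to $P(\tau)$ in \eqref{eq:reward} once the cap $C$ is dropped: a verified call earns cashback $\gamma(q_i-\varepsilon)$ and is exempt from rent, and an unverified call pays $\rho$. Dropping the cap can only increase the cashback, so a bound on the uncapped price also bounds the actual per-call price from above. The plan is to condition on $A_i$ and split $\mathbb{E}[X_i\mid A_i]$ according to the value of $v_i$.

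\emph{Step 1: bound the cashback on verification.} Since $U_\theta$ in \eqref{eq:U} is a renormalized probability, it lies in $[0,1]$. Hence $E_i\le U^{\mathrm{real}}_i\le 1$ and $q_i=\min\{E_i,D_i\}\le 1$. On $\{v_i=1\}$ we also have $q_i>\varepsilon$, so
\[
0<\gamma(q_i-\varepsilon)\le\gamma(1-\varepsilon).
\]
This crude bound deliberately ignores $A_i$. On $A_i$ the true credit satisfies $q^\star_i\le\varepsilon$, so any cashback comes only from noise in $q_i$. The bound $q_i\le 1$ still holds, and it is all the statement requires.

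\emph{Step 2: decompose and use the verification-rate bound.} Write $p=\Pr(v_i=1\mid A_i)\le\alpha$, which is well defined because $\Pr(A_i)>0$. Splitting on $v_i$ gives
\[
\mathbb{E}[X_i\mid A_i]=\mathbb{E}[\gamma(q_i-\varepsilon)\ind[v_i=1]\mid A_i]-\rho(1-p).
\]
By Step 1, the first term is at most $p\gamma(1-\varepsilon)$. Therefore
\[
\mathbb{E}[X_i\mid A_i]\le p\,\gamma(1-\varepsilon)-(1-p)\rho=p\bigl(\gamma(1-\varepsilon)+\rho\bigr)-\rho .
\]
This expression is nondecreasing in $p$, because $\gamma(1-\varepsilon)+\rho\ge 0$, assuming $\gamma,\rho\ge0$ and $\varepsilon\le1$. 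Substituting $p\le\alpha$ yields the stated bound $\alpha\gamma(1-\varepsilon)-(1-\alpha)\rho$.

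\emph{Step 3: the sign condition.} Since $1-\alpha>0$, the bound is negative exactly when $(1-\alpha)\rho>\alpha\gamma(1-\varepsilon)$, which rearranges to the stated threshold on $\rho$.

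The only delicate point is the modelling step. We must make explicit that $X_i$ excludes the cap and that the score range gives $q_i\le1$. For free-form calls, scored by mean log-probability, this range bound fails, so the proposition should be scoped to multiple-choice calls or to a bounded score. The monotonicity in $p$ is the one place the nonnegativity of $\gamma$, $\rho$ and $1-\varepsilon$ is used, and we would state it as an assumption.
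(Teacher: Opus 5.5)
Your proof is correct and follows the paper's argument essentially step for step: you condition on $A_i$, bound the verified-branch price by $\gamma(1-\varepsilon)$ using the score range (A1), rewrite the bound as $p(\gamma(1-\varepsilon)+\rho)-\rho$, and then substitute $p\le\alpha$, which is valid because $\gamma(1-\varepsilon)+\rho>0$. Your caveat for free-form calls also matches the paper, which notes that (A1) must then be replaced by an explicit range bound $B$, with $\gamma(B-\varepsilon)$ in place of $\gamma(1-\varepsilon)$.
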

The rent thus offsets in expectation what chance verification collects, at our constants for every $\alpha$ below $9.5\%$; for free-form calls the bound assumes a score range of at most $B$, with $\gamma(B-\varepsilon)$ in place of $\gamma(1-\varepsilon)$. The null calibration of Appendix~\ref{app:config}, run before training on training-pool questions and reproduced on V$^\ast$ and HR-Bench-4K, gives proxy rates of $2.4\%$ and $2.5\%$ for it; Appendix~\ref{app:proofs} gives the proof and shows that the cap only lowers the trajectory price.

\subsection{Dual-channel GRPO optimization}
\label{sec:optimization}
Standard GRPO~\citep{Shao_2024_GRPO} normalizes total reward within each prompt group $g(j)$, which makes the effective scale of a fixed price depend on group composition: outcome variation suppresses price differences and identical outcomes amplify them. We therefore separate rollout $j$'s reward into outcome $O_j=r_{\mathrm{out}}(\tau_j)$ and price $P_j=P(\tau_j)$, and use
\begin{equation}
\widehat{A}_j=\frac{O_j-\overline{O}_{g(j)}}{\max\{\mathrm{std}(O_{g(j)}),\sigma_{\min}\}}+\big(P_j-\overline{P}_{g(j)}\big),
\label{eq:dual-channel}
\end{equation}
with $\sigma_{\min}=0.15$. The outcome channel retains GRPO's group-relative normalization, whereas the price channel is only centered and therefore remains in its original units, Figure~\ref{fig:method}(c). \begin{proposition}[Price scale under normalization]
\label{prop:scale}
In a group whose rollouts are all correct, carry no verified call, and differ in their unverified call counts $n_j$, single-channel normalization gives $A_j=-(n_j-\bar{n})/\mathrm{std}(n)$, independent of $\rho$, whereas Equation~\ref{eq:dual-channel} gives $\widehat{A}_j=-\rho\,(n_j-\bar{n})$.
\end{proposition}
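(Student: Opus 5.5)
The plan is a direct computation of both advantages in the stated group, with the only care needed in pinning down what ``single-channel normalization'' means and in handling degenerate cases. Fix a prompt group $g$ of $G$ rollouts satisfying the hypotheses. All rollouts are correct, so $O_j=1$ for every $j$. No rollout carries a verified call, so $\mathcal{V}(\tau_j)=\emptyset$, the cashback term in Equation~\ref{eq:reward} is $\min\{C,0\}=0$, and every executed call is unverified. Hence $P_j=-\rho\,n_j$ and $R_j=1-\rho\,n_j$. The rest of the proof substitutes these into the two advantage formulas.

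\textbf{Single-channel case.} I take standard GRPO to mean $A_j=(R_j-\overline{R}_g)/\mathrm{std}(R_g)$. The map $n\mapsto 1-\rho n$ is affine, so $\overline{R}_g=1-\rho\bar{n}$ and $R_j-\overline{R}_g=-\rho(n_j-\bar{n})$. Also $\mathrm{std}(R_g)=\rho\,\mathrm{std}(n)$, using $\rho>0$ so that $|{-\rho}|=\rho$. Dividing, $\rho$ cancels and $A_j=-(n_j-\bar{n})/\mathrm{std}(n)$, which does not depend on $\rho$.

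\textbf{Dual-channel case.} The outcome channel has $O_j-\overline{O}_g=0$ for all $j$. Its denominator is $\max\{0,\sigma_{\min}\}=\sigma_{\min}>0$, so the first term of Equation~\ref{eq:dual-channel} is exactly $0$. The floor $\sigma_{\min}$ is what keeps this from being $0/0$. The price channel is only centered, so $P_j-\overline{P}_g=-\rho n_j+\rho\bar{n}=-\rho(n_j-\bar{n})$. Therefore $\widehat{A}_j=-\rho(n_j-\bar{n})$.

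\textbf{Where the care goes.} The algebra is routine. The two points that need attention are the following.
\begin{itemize}
\item The hypothesis that the $n_j$ ``differ'' is needed to get $\mathrm{std}(n)>0$, so that the single-channel expression is well defined. I would state this explicitly. If standard GRPO is implemented with an additive $\epsilon$ in the denominator, the claim holds up to that $\epsilon$, and I would note this.
\item The cashback must vanish exactly. That follows from $\mathcal{V}=\emptyset$, independently of the cap $C$.
\end{itemize}
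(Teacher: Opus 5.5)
Your proposal is correct and follows the paper's proof essentially step for step. You substitute $O_j=1$, $P_j=-\rho\,n_j$, $R_j=1-\rho\,n_j$, cancel $\rho$ using $\mathrm{std}(R)=\rho\,\mathrm{std}(n)>0$, and read off the dual-channel advantage as $0/\sigma_{\min}+(P_j-\bar P)=-\rho\,(n_j-\bar n)$. The one thing the paper states that you leave implicit is that the price channel is active because the group contains correct rollouts, so the zero-price rule for all-wrong groups does not apply; your all-correct hypothesis already guarantees this.
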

Standard GRPO thus cannot see the rent's magnitude in such groups, while the price channel keeps it: an otherwise identical rollout with one additional unverified call differs in advantage by exactly $\rho$. When no rollout is correct the price advantage is set to zero, since prices there carry only call counts. In a group holding both outcomes, Proposition~\ref{prop:dominance} carries over: a correct rollout's $\widehat{A}_j$ exceeds a wrong one's by at least $1/\max\{\mathrm{std}(O),\sigma_{\min}\}-\rho\,n_{\max}$, which is at least $1.5$ since the sample standard deviation of eight binary outcomes is at most $0.54$.

We substitute $\widehat{A}_j$ into the clipped GRPO objective without backpropagating through probe scores; a correct trajectory with $m$ eligible calls costs $m(K+2)$ score evaluations, Algorithm~\ref{alg:cc} in Appendix~\ref{app:config}.

\section{Experiments}
\label{sec:experiments} 

\subsection{Setup}
\label{sec:setup}

\paragraph{Training.}
Qwen2.5-VL-7B-Instruct~\citep{bai2025qwen25vl} is fine-tuned on 6,960 filtered Mini-o3 trajectories~\citep{Lai_2026_Mini-o3}, then trained with GRPO under the reward of Section~\ref{sec:method} on 16,000 VisualProbe and DeepEyes prompts~\citep{Lai_2026_Mini-o3,Zheng_2026_DeepEyes}, deduplicated, decontaminated against every evaluation image, and filtered by difficulty. The outcome reward $r_{\mathrm{out}}$ counts an answer correct when it matches the reference exactly or by option text; free-form answers that fail both are judged by GPT-4o~\citep{openai2024gpt4o}, whose judgments agree with human labels on 99.4\% of a 500-answer sample, Appendix~\ref{app:protocol}. Appendix~\ref{app:data} details curation and Appendix~\ref{app:config} the configuration.

\paragraph{Benchmarks and metrics.}
Evaluation covers four perception benchmarks, V$^\ast$~\citep{wu2024vstar}, HR-Bench-4K and HR-Bench-8K~\citep{wang2024hrbench}, and MME-RealWorld~\citep{zhang2024mmerealworld}, and three general benchmarks, MMStar~\citep{chen2024mmstar}, ChartQA~\citep{masry2022chartqa}, and BLINK~\citep{fu2024blink}. On V$^\ast$ and both HR-Bench splits, the \emph{spurious-call rate} is the share of executed image-returning calls with $\min\{E_i,D_i\}\leq\varepsilon$, with each agent's $E_i$ and $D_i$ computed under its own gold-answer score on its own prefixes. Appendix~\ref{app:protocol} gives the evaluation details.

\paragraph{Baselines.}
Baselines are seven open tool-using agents re-evaluated from their released checkpoints in the same pipeline, DeepEyes, Pixel-Reasoner, Mini-o3, Thyme, CodeV-RL, DeepEyesV2~\citep{hong2025deepeyesv2}, and PyVision-RL~\citep{zhao2026pyvisionrl}; three models without visual tools, Qwen2.5-VL-7B, Qwen3-VL-8B~\citep{bai2025qwen3vl}, and InternVL3-8B~\citep{zhu2025internvl3}; and GPT-5~\citep{openai2025gpt5} and Gemini-2.5-Pro~\citep{google2025gemini25} through their APIs. Appendix~\ref{app:protocol} gives the audit rule for code-based agents. TACO has no public checkpoint; Figure~\ref{fig:fake_call}(b) audits its branch probe and Table~\ref{tab:ablation} retrains its branch reward.

\subsection{Main Results}
\label{sec:main-results}

\begin{table}[t]
\caption{Accuracy (\%) on four perception and three general benchmarks, all evaluated under our protocol; Avg. is the mean over the seven. Bold: best per column among tool-using agents; underline: second best. Cell colour: gain over Qwen2.5-VL-7B. The ours row is the run used throughout; $\pm$ is the sample standard deviation over three seeds, Table~\ref{tab:seeds} (Appendix ~\ref{sec:seed variation}). Baselines are single runs, and their spread is not measured.}
\label{tab:main}
\centering\footnotesize
\setlength{\tabcolsep}{3pt}
\begin{tabular}{lcccccccc}
\toprule
 & \multicolumn{4}{c}{\textbf{Perception}} & \multicolumn{3}{c}{\textbf{General}} &  \\
\cmidrule(lr){2-5}\cmidrule(lr){6-8}
\textbf{Model} & \textbf{V$^\ast$} & \textbf{HR-4K} & \textbf{HR-8K} & \textbf{MME-RW} & \textbf{MMStar} & \textbf{ChartQA} & \textbf{BLINK} & \textbf{Avg.} \\
\midrule
\rowcolor{blockbg}\multicolumn{9}{l}{\textbf{\textit{Closed models (API)}}} \\
GPT-5 & 72.3 & 75.6 & 73.8 & 68.9 & 73.4 & 76.5 & 69.7 & 72.9 \\
Gemini-2.5-Pro & 79.1 & 84.2 & 81.1 & 58.7 & 73.4 & 83.8 & 73.6 & 76.3 \\
\rowcolor{blockbg}\multicolumn{9}{l}{\textbf{\textit{Open models without a visual tool}}} \\
Qwen2.5-VL-7B & 77.0 & 69.1 & 65.6 & 58.0 & 64.4 & 83.9 & 56.2 & 67.7 \\
Qwen3-VL-8B & 85.9 & 79.2 & 74.3 & 60.7 & 70.5 & 88.4 & 68.9 & 75.4 \\
InternVL3-8B & 70.2 & 70.4 & 69.1 & 61.7 & 68.2 & 86.0 & 55.6 & 68.7 \\
\rowcolor{blockbg}\multicolumn{9}{l}{\textbf{\textit{Tool-using agents (7B)}}} \\
DeepEyes & \cellcolor{gainc!39!white}84.8 & \cellcolor{gainc!22!white}73.4 & \cellcolor{gainc!27!white}70.9 & \cellcolor{gainc!31!white}64.3 & \cellcolor{gainc!3!white}65.1 & 82.6 & \cellcolor{gainc!6!white}57.4 & \cellcolor{gainc!18!white}71.2 \\
Pixel-Reasoner & \cellcolor{gainc!34!white}83.8 & \cellcolor{gainc!8!white}70.6 & \cellcolor{gainc!8!white}67.2 & \cellcolor{gainc!32!white}64.5 & \cellcolor{gainc!2!white}64.8 & 81.9 & \cellcolor{gainc!3!white}56.9 & \cellcolor{gainc!11!white}70.0 \\
Mini-o3 & \cellcolor{gainc!52!white}\underline{87.4} & \cellcolor{gainc!45!white}\underline{78.0} & \cellcolor{gainc!39!white}73.4 & \cellcolor{gainc!36!white}\underline{65.2} & 64.1 & \cellcolor{gainc!7!white}85.3 & 55.6 & \cellcolor{gainc!25!white}72.7 \\
Thyme & \cellcolor{gainc!34!white}83.8 & \cellcolor{gainc!39!white}76.8 & \cellcolor{gainc!29!white}71.3 & \cellcolor{gainc!34!white}64.7 & \cellcolor{gainc!4!white}65.2 & \cellcolor{gainc!10!white}85.9 & \cellcolor{gainc!1!white}56.4 & \cellcolor{gainc!21!white}72.0 \\
CodeV-RL & \cellcolor{gainc!39!white}84.8 & \cellcolor{gainc!36!white}76.3 & \cellcolor{gainc!30!white}71.5 & \cellcolor{gainc!33!white}64.6 & \cellcolor{gainc!14!white}\underline{67.2} & \cellcolor{gainc!1!white}84.1 & \cellcolor{gainc!10!white}\underline{58.3} & \cellcolor{gainc!24!white}72.4 \\
DeepEyesV2 & \cellcolor{gainc!31!white}83.2 & \cellcolor{gainc!42!white}77.6 & \cellcolor{gainc!38!white}73.2 & \cellcolor{gainc!35!white}64.9 & \cellcolor{gainc!5!white}65.4 & \cellcolor{gainc!12!white}\underline{86.3} & \cellcolor{gainc!8!white}57.8 & \cellcolor{gainc!24!white}72.6 \\
PyVision-RL & \cellcolor{gainc!49!white}86.9 & \cellcolor{gainc!41!white}77.3 & \cellcolor{gainc!41!white}\underline{73.8} & \cellcolor{gainc!22!white}62.4 & \cellcolor{gainc!6!white}65.7 & \cellcolor{gainc!11!white}86.2 & \cellcolor{gainc!8!white}57.9 & \cellcolor{gainc!26!white}\underline{72.9} \\
\midrule
CounterCredit (ours) & \cellcolor{gainc!62!white}{\scriptsize\textbf{89.5}$\pm$0.81} & \cellcolor{gainc!56!white}{\scriptsize\textbf{80.2}$\pm$0.75} & \cellcolor{gainc!54!white}{\scriptsize\textbf{76.4}$\pm$0.90} & \cellcolor{gainc!49!white}{\scriptsize\textbf{67.7}$\pm$0.50} & \cellcolor{gainc!22!white}{\scriptsize\textbf{68.9}$\pm$0.35} & \cellcolor{gainc!16!white}{\scriptsize\textbf{87.1}$\pm$0.30} & \cellcolor{gainc!33!white}{\scriptsize\textbf{62.8}$\pm$0.65} & \cellcolor{gainc!42!white}{\scriptsize\textbf{76.1}$\pm$0.40} \\
\bottomrule
\end{tabular}
\end{table}
\textbf{The lead spans all seven benchmarks.} \method{} leads every tool agent on all seven benchmarks in Table~\ref{tab:main}, although the agents differ in tools, prompts, and interaction formats, so the matched comparisons carry the causal claim. On V$^\ast$, HR-Bench-4K, and HR-Bench-8K it adds 12.5, 11.1, and 10.8 points over the base and exceeds the best tool agent by 2.1, 2.2, and 2.6. The three general benchmarks rise as well, by 3.2 to 6.6 points, the largest gains among the tool agents, and the three resolution scores exceed GPT-5, which has no tool loop.

\textbf{\method{} pairs its accuracy with selective tool use.} Table~\ref{tab:calls} places its call count in the middle of the field, 1.76 to 1.87 per question, with the lowest spurious-call rate, 31.4\% to 36.3\%, each judged under its own gold-answer score; re-scoring the same V$^\ast$ rollouts with one common scorer, the SFT cold start, and with an outside scorer that took no part in training leaves \method{} lowest under both and reorders the seven baselines by at most one adjacent swap, Spearman $\rho=0.96$ and $0.75$ against their own scorers, Appendix~\ref{app:results}.

\textbf{Mini-o3 is the matched comparison.} It uses the same crop tool and cold-start data under an outcome reward from an LLM judge: on V$^\ast$ \method{} scores 2.1 points higher, makes 1.78 calls per question against 2.50, and lowers the spurious-call rate from 40.7\% to 31.4\%. Averaged over the three resolution benchmarks, Figure~\ref{fig:tradeoff} places \method{} beyond the observed baseline frontier, at 82.0 points in 8.1 seconds against 74 to 78 points for the agents under six seconds and 79.6 for the best baseline, which takes 10.9.

\begin{table}[t]
\begin{minipage}[t]{0.66\linewidth}
\caption{Calls per question, latency in seconds per question at batch size 1 on one NVIDIA H20 with vLLM, and spurious-call rate, the share of executed image-returning calls with $\min(E,D)\leq\varepsilon$. Bold: lowest; underline: second lowest.}
\label{tab:calls}
\centering\scriptsize
\setlength{\tabcolsep}{1.6pt}
\begin{tabular}{lccccccccc}
\toprule
 & \multicolumn{3}{c}{\textbf{Calls / question}} & \multicolumn{3}{c}{\textbf{Latency (s)}} & \multicolumn{3}{c}{\textbf{Spurious-call rate (\%)}} \\
\cmidrule(lr){2-4}\cmidrule(lr){5-7}\cmidrule(lr){8-10}
\textbf{Model} & \textbf{V$^\ast$} & \textbf{HR-4K} & \textbf{HR-8K} & \textbf{V$^\ast$} & \textbf{HR-4K} & \textbf{HR-8K} & \textbf{V$^\ast$} & \textbf{HR-4K} & \textbf{HR-8K} \\
\midrule
DeepEyes & 0.91 & 0.98 & 0.96 & 16.1 & 14.9 & 15.8 & 87.4 & 83.6 & 82.1 \\
Pixel-Reasoner & 0.95 & 0.84 & 0.82 & 2.0 & 2.0 & 3.0 & 68.1 & 52.5 & 52.1 \\
Mini-o3 & 2.50 & 2.46 & 2.60 & 7.3 & 12.1 & 13.4 & 40.7 & \underline{50.5} & \underline{51.0} \\
Thyme & 0.14 & 0.19 & 0.22 & 2.3 & 2.6 & 2.9 & \underline{36.1} & 77.2 & 75.3 \\
CodeV-RL & 1.84 & 0.43 & 0.47 & 6.7 & 3.3 & 3.6 & 52.4 & 73.7 & 53.9 \\
DeepEyesV2 & 1.23 & 1.27 & 1.32 & 4.4 & 5.7 & 6.3 & 51.3 & 55.3 & 55.3 \\
PyVision-RL & 2.87 & 3.30 & 3.60 & 14.3 & 20.0 & 23.0 & 54.6 & 56.4 & 63.9 \\
\midrule
\rowcolor{oursbg}CounterCredit (ours) & 1.78 & 1.76 & 1.87 & 5.4 & 8.8 & 10.0 & \textbf{31.4} & \textbf{35.1} & \textbf{36.3} \\
\bottomrule
\end{tabular}
\end{minipage}\hfill
\begin{minipage}[t]{0.32\linewidth}
\vspace{0pt}
\makeatletter\def\@captype{figure}\makeatother
\centering
\includegraphics[width=\linewidth]{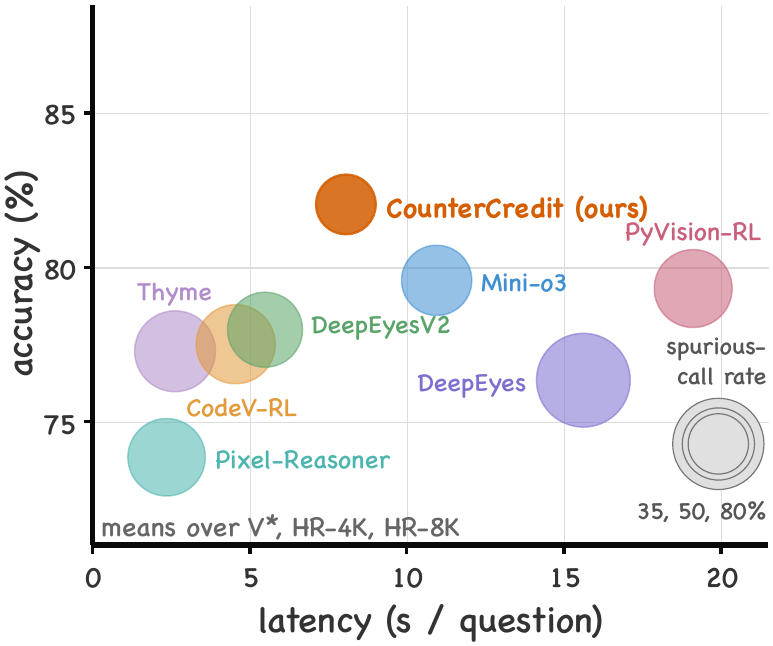}
\caption{Accuracy against latency; marker area is the spurious-call rate.}
\label{fig:tradeoff}
\end{minipage}
\end{table} 

\subsection{Auditing Existing Rewards}
\label{sec:efficiency}

\textbf{Most calls pass neither test.} Figure~\ref{fig:fake_call}(b) audits four rewards on the SFT checkpoint's trajectories: the outcome reward, which credits every call on a correct trajectory; the branch probe of TACO~\citep{feng2026taco}; the external crop judge of CodeV~\citep{Hou_2026_CodeV}, served by GPT-4o~\citep{openai2024gpt4o}; and the self-judge of FaithEyes~\citep{wang2026faitheyes}, answered by the policy itself. Base rates are low: 14\% to 18\% of executed calls pass each test alone and 10\% to 12\% pass both; Appendix~\ref{app:results} gives each payment rule and the numbers per benchmark.

\textbf{The two tests are not one test asked twice.} Only 57\% to 85\% of needed calls were also used and 54\% to 78\% of used calls were needed, so a reward that checks one test alone pays 15\% to 46\% of its verified calls for the wrong reason. Re-decoding the same calls with a random patch in place of the real crop separates them at the answer level too: real calls change the decoded answer sixteen times as often as waste, and performative calls change it as rarely as waste although their decision value is positive, Table~\ref{tab:answerlevel} (Appendix ~\ref{sec:F1}).

\textbf{None of the four rewards asks both questions.} On V$^\ast$ the outcome reward's payments pass both tests 12.8\% of the time against a base rate of 11.8\%, and the two judges, which read the crop, reach 21.0\% and 14.7\%, with HR-Bench in Table~\ref{tab:audit-full}  (Appendix ~\ref{sec:F1}). None of the three changes the call it scores, and 56\% to 77\% of their payments pass neither test. The branch probe of TACO does change it, and one intervention buys precision: 42\% to 50\% of its payments pass both tests, at 5 to 7 per hundred. It removes the whole branch, so it cannot see whether the pixels mattered: 19.4\% of its payments on V$^\ast$ go to performative calls, the region the second question closes.

\subsection{Ablation}
\label{sec:ablation}

\begin{table}[t]
\caption{\textbf{Training stages and reward ablation. }All RL rows start from the SFT checkpoint with the same prompt pool and budget and differ only in the reward; $\gamma=0$ removes the cashback, $\rho=0$ the rent, and the other arms are defined in the text. Avg.: mean of the seven benchmarks; calls and spurious-call rate on V$^\ast$. The \method{} row is the seed shared with every arm; its spread over three seeds is in Tables~\ref{tab:main} and \ref{tab:seeds}.}
\label{tab:ablation}
\centering\scriptsize
\setlength{\tabcolsep}{2.4pt}
\begin{tabular}{lcccccccccc}
\toprule
 & \multicolumn{4}{c}{\textbf{Perception}} & \multicolumn{3}{c}{\textbf{General}} &  & \multicolumn{2}{c}{\textbf{V$^\ast$ calls}} \\
\cmidrule(lr){2-5}\cmidrule(lr){6-8}\cmidrule(lr){10-11}
\textbf{Model} & \textbf{V$^\ast$} & \textbf{HR-4K} & \textbf{HR-8K} & \textbf{MME-RW} & \textbf{MMStar} & \textbf{ChartQA} & \textbf{BLINK} & \textbf{Avg.} & \textbf{Calls} & \textbf{Spur.\,\%} \\
\midrule
\rowcolor{blockbg}\multicolumn{11}{l}{\textbf{\textit{Starting points}}} \\
Qwen2.5-VL-7B & 77.0 & 69.1 & 65.6 & 58.0 & 64.4 & 83.9 & 56.2 & 67.7 & 0 & -- \\
SFT (cold start) & 71.7 & 70.4 & 61.6 & 55.6 & 60.3 & 80.8 & 54.5 & 65.0 & 2.68 & 88.2 \\
\rowcolor{blockbg}\multicolumn{11}{l}{\textbf{\textit{Which calls are verified}}} \\
Outcome-only GRPO & 80.1 & 73.9 & 69.4 & 56.8 & 61.7 & 86.4 & 54.3 & 68.9 & 1.84 & 82.1 \\
Branch reward (TACO) & 85.1 & 76.2 & 72.0 & 60.7 & 62.8 & 86.9 & 57.4 & 71.6 & 2.45 & 62.9 \\
Verify on $E$ only & 82.2 & 74.6 & 68.3 & 59.0 & 62.3 & 86.4 & 53.1 & 69.4 & 1.46 & 71.3 \\
Verify on $D$ only & 84.3 & 76.5 & 72.5 & 60.5 & 63.5 & 86.5 & 55.5 & 71.3 & 1.65 & 65.0 \\
\rowcolor{blockbg}\multicolumn{11}{l}{\textbf{\textit{What verification buys}}} \\
CounterCredit, $\gamma=0$ & 81.2 & 74.6 & 70.2 & 59.4 & 62.9 & 86.3 & 55.4 & 70.0 & 1.35 & 62.2 \\
CounterCredit, $\rho=0$ & 85.9 & 77.9 & 74.9 & 63.0 & 65.3 & 86.9 & 58.0 & 73.1 & 2.70 & 48.7 \\
\rowcolor{blockbg}\multicolumn{11}{l}{\textbf{\textit{How the price enters training}}} \\
Single-channel GRPO & 62.8 & 68.8 & 63.9 & 57.1 & 60.0 & 83.7 & 55.3 & 64.5 & 0.78 & 62.5 \\
\midrule
\rowcolor{oursbg}CounterCredit & 89.5 & 80.2 & 76.4 & 67.7 & 68.9 & 87.1 & 62.8 & 76.1 & 1.78 & 31.4 \\
\bottomrule
\end{tabular}
\end{table}

\textbf{Accuracy depends on which calls survive, not on how many.} The SFT checkpoint calls 2.68 times per question on V$^\ast$, 88.2\% of those calls are spurious, and its average falls 2.7 points below the base, Table~\ref{tab:ablation}. Outcome-only GRPO cuts the calls by a third and gains 1.2 points over the base, yet reduces the spurious share only to 82.1\%: a reward that pays every call on a correct trajectory equally changes how often the policy calls, not which calls survive. \method{} averages 76.1 against 68.9 at 1.78 calls per question against 1.84, and the spurious share falls to 31.4\%. Every RL row shares the checkpoint, prompt pool, budget, decoding, and seed, so this comparison carries the paper's causal claim; outcome-only GRPO uses our cold start and six-call budget, whereas the released Mini-o3 of Table~\ref{tab:main} uses its own.

\textbf{No component of the reward is redundant.} Verifying on $E_i$ or on $D_i$ alone recovers 0.5 and 2.4 of the 7.2-point gain and leaves 71.3\% and 65.0\% of calls spurious, since each single test leaves one failure region unpriced. A branch reward in the style of TACO~\citep{feng2026taco}, which credits a call when the tool branch turns a wrong answer right, recovers 2.7 at 62.9\% spurious and needs 2.45 calls per question: asking the first question alone leaves 4.5 to 4.8 points on the table however it is asked. Removing the cashback recovers 1.1 points at 1.35 calls per question and removing the rent recovers 4.2 at 2.70: the rent governs how often the policy calls and the cashback governs which calls it makes. Normalizing the whole reward instead of Equation~\ref{eq:dual-channel} is worse still, reducing the policy to 0.78 calls and 64.5 on average, below the cold start. Proposition~\ref{prop:scale} gives the scale that normalization erases in unanimous groups, and the single-channel arm also lacks the floor $\sigma_{\min}$, so more than the normalization differs between the two.

\begin{figure}[t]
\centering
\includegraphics[width=\linewidth]{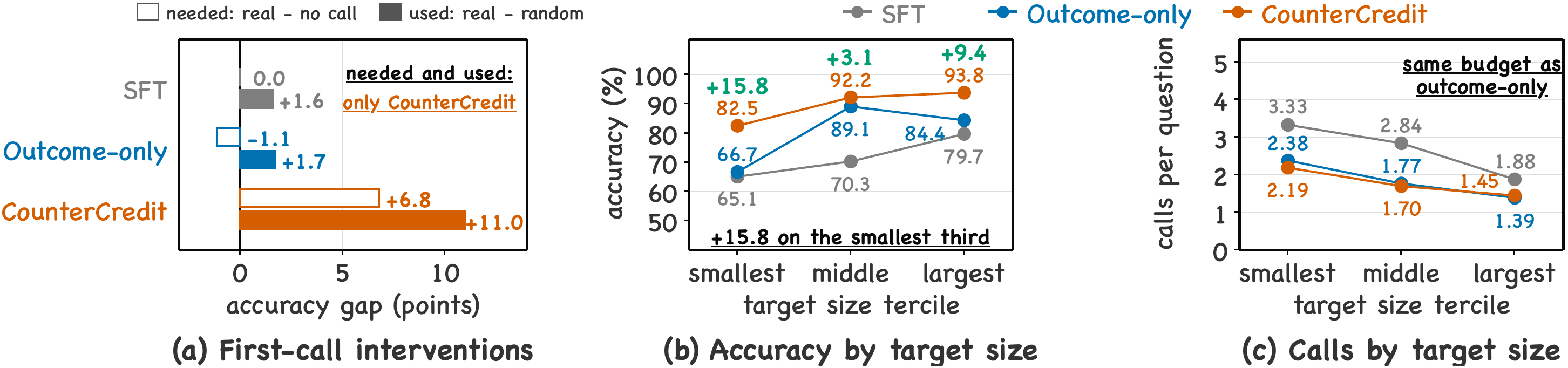}
\caption{V$^\ast$ by first-call intervention and by target size for the SFT checkpoint, outcome-only GRPO, and \method{}. (a) Trajectories stopped at the first call and answered from the cached prefix, with the two gaps defined in the legend. (b) Accuracy and (c) calls per question over terciles of the target's share of the image; green labels give \method{}'s margin over outcome-only GRPO; values in Tables~\ref{tab:cropswap} and \ref{tab:strata} (Appendix ~\ref{sec:F2}).}
\label{fig:interventions}
\end{figure}

\textbf{The arms separate only after the returned pixels arrive.} Figure~\ref{fig:interventions} runs two checks that do not apply the verification rule the reward is built from, so the gains they show do not depend on the metric the policy was trained against. In (a) the three arms lie within 3.1 points of each other before the pixels arrive. The real crop is then worth 0.0 and $-1.1$ points relative to answering at once for the two baselines and 1.6 and 1.7 relative to a random patch, while for \method{} the same margins are 6.8 and 11.0. In (b) the gain concentrates on the smallest third of targets, 15.8 points over outcome-only GRPO against 3.1 and 9.4 on the middle and largest thirds, where the evidence is a smaller part of the image and a look pays only when it fetches that evidence.

\textbf{The recipe transfers to a second base.} Table~\ref{tab:base} applies the same cold start and reward without change: the average rises from 67.7 to 76.1 on Qwen2.5-VL-7B and from 75.4 to 80.8 on Qwen3-VL-8B, with the largest gains where the base is weakest; no reward ablation was run on Qwen3-VL-8B.

\begin{table}[h]
\begin{minipage}[c]{0.42\linewidth}
\caption{\textbf{Two base models on the perception benchmarks (accuracy \%).} Base rows answer directly; + rows apply our cold start and RL. Avg. is over all seven benchmarks; the three general benchmarks are in Table~\ref{tab:base-full} (Appendix ~\ref{sec:F3}).}
\label{tab:base}
\end{minipage}\hfill
\begin{minipage}[c]{0.56\linewidth}
\centering\scriptsize
\setlength{\tabcolsep}{3.2pt}
\begin{tabular}{lccccc}
\toprule
\textbf{Base / Method} & \textbf{V$^\ast$} & \textbf{HR-4K} & \textbf{HR-8K} & \textbf{MME-RW} & \textbf{Avg.} \\
\midrule
Qwen2.5-VL-7B & 77.0 & 69.1 & 65.6 & 58.0 & 67.7 \\
\rowcolor{oursbg}\quad + \method{} & 89.5 & 80.2 & 76.4 & 67.7 & 76.1 \\
\midrule
Qwen3-VL-8B & 85.9 & 79.2 & 74.3 & 60.7 & 75.4 \\
\rowcolor{oursbg}\quad + \method{} & 91.6 & 85.4 & 84.0 & 70.1 & 80.8 \\
\bottomrule
\end{tabular}
\end{minipage}
\end{table}

\section{Conclusion and Limitations}
\label{sec:limitations}
\method{} pays a visual call only when looking helped and the returned pixels carried the help, and charges rent otherwise. On the cold start only 9.8\% to 11.8\% of calls met both conditions. Paying for those alone raised accuracy on the three resolution benchmarks by 6.3 to 9.4 points over outcome-only GRPO from the same cold start and lifted a second base model; against a matched branch reward it reaches 76.1 on the seven-benchmark average against 71.6, at 1.78 calls per question against 2.45 and 31.4\% spurious calls against 62.9\%. The decision value does not separate the information in the crop from the effect of eliciting an answer at a prefix where the policy had decided to crop. Two further limits set the next steps. The evidence value is defined only for a crop, which admits a matched null return; tools that return text or structured output need their own null before the same question can be asked of them. The efficiency metric is built from the two contrasts the reward is trained on, so an evaluation defined apart from the reward would settle the question.

\bibliography{references}
\bibliographystyle{iclr2027_conference}

\clearpage
\appendix
\raggedbottom
\begin{center}{\LARGE\sc Appendix}\end{center}
\vspace{0.6em}

\section{Theoretical Support}
\label{app:proofs}
This appendix proves the lemma and the four propositions of Section~\ref{sec:method} and states what each one buys and where it stops. The results are exact algebra on the reward as implemented, together with one expectation bound whose premise is measured rather than enforced.
\begin{summarybox}
\small\setlength{\parskip}{1pt}
\textbf{Robustness.} $q_i$ is the advantage a crop keeps against the worst mixture of the two references (Proposition~\ref{prop:robust}).\par
\textbf{Safety.} The price never reorders a correct and a wrong trajectory; the margin is $0.7$ (Proposition~\ref{prop:dominance}).\par
\textbf{Incentive.} Chance verification loses in expectation whenever its rate is under $9.5\%$ (Proposition~\ref{prop:null}).\par
\textbf{Scale.} Single-channel normalization erases the rent's size in unanimous groups; the dual channel keeps it (Proposition~\ref{prop:scale}).
\end{summarybox}

\subsection{Setup and assumptions}
Notation follows Section~\ref{sec:method}. For call $i$, $U^{\mathrm{rand}}_{ik}=U_\theta(h_i,c_i,\tilde{o}_{ik})$ and $\bar{U}^{\mathrm{rand}}_i$ is their mean over $k=1,\dots,K$, so that
\begin{equation}
q_i=\min\{E_i,D_i\}=U^{\mathrm{real}}_i-\max\{U^{\mathrm{now}}_i,\bar{U}^{\mathrm{rand}}_i\}.
\label{eq:qmax}
\end{equation}
Write $b_i=\gamma\,v_i\,(q_i-\varepsilon)\geq0$ for the pre-cap cashback of call $i$, so that Equation~\ref{eq:reward} reads $P(\tau)=\min\{C,\sum_i b_i\}-\rho\,(n(\tau)-|\mathcal{V}(\tau)|)$ with the cap acting on the sum. For a group of $G=8$ rollouts, $\bar z$ denotes the group mean of a quantity $z$ and $\mathrm{std}(z)$ its sample standard deviation, which divides by $G-1$. The proofs use four assumptions.
\begin{itemize}\setlength{\itemsep}{1pt}
\item[(A1)] \textbf{Bounded scores.} $U_\theta(p)\in[0,1]$ for every prefix $p$, as for the renormalized option probability of Equation~\ref{eq:U}; hence $D_i$, $E_i$, and $q_i$ lie in $[-1,1]$.
\item[(A2)] \textbf{Call budget.} Every trajectory executes at most $n_{\max}$ calls, and $\rho\,n_{\max}<1$.
\item[(A3)] \textbf{False-verification rate.} On the event $A_i=\{q^\star_i\leq\varepsilon\}$ of Proposition~\ref{prop:null}, $\Pr(v_i=1\mid A_i)\leq\alpha$ with $\alpha<\rho/(\rho+\gamma(1-\varepsilon))$, equivalently $\rho>\frac{\alpha}{1-\alpha}\gamma(1-\varepsilon)$.
\item[(A4)] \textbf{Group normalization.} The single-channel arm of Table~\ref{tab:ablation} divides $R_j-\bar R$ by $\mathrm{std}(R)$ with no floor; the dual channel is Equation~\ref{eq:dual-channel}, whose outcome channel carries the floor $\sigma_{\min}$ and whose price channel is centered but not normalized.
\end{itemize}

\subsection{Common shifts cancel}
Both values are differences of scores on the same prefix. Whatever the scorer adds to every compared arm alike, a prompt-level preference for one option letter or a call text that raises the confidence of every continuation, drops out of the difference. This is what makes the reward insensitive to text that raises every arm's score alike; TACO states the same invariance for its two-probe difference~\citep{feng2026taco}.
\begin{lemma}[Common additive shifts cancel]
\label{lem:shared}
Fix call $i$ with its realized crop and sampled patches, and let primes denote quantities recomputed after a text edit.
(a) If $U^{\mathrm{real}}_i=T^{\mathrm{real}}_i+s(h_i)$ and $U^{\mathrm{now}}_i=T^{\mathrm{now}}_i+s(h_i)$ for a component $s$ that depends only on the prefix, then $D_i=T^{\mathrm{real}}_i-T^{\mathrm{now}}_i$; in particular, an edit of $h_i$ that shifts both scores by the same $\delta$ leaves $D_i'=D_i$.
(b) If $h_i$ is fixed and an edit of $c_i$ shifts $U^{\mathrm{real}}_i$ and every $U^{\mathrm{rand}}_{ik}$ by the same $\eta$, then $E_i'=E_i$ and $D_i'=D_i+\eta$.
(c) For every call, $0\leq b_i\leq\gamma\max\{E_i-\varepsilon,0\}$; under the edit of (b) the same bound holds for $b_i'$, and with the other calls' contributions fixed, including call $i$ raises the capped sum by an amount in $[0,b_i]$.
\end{lemma}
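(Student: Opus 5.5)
The plan is to treat each part of Lemma~\ref{lem:shared} as direct algebra on the definitions in Equations~\ref{eq:decision}, \ref{eq:evidence}, \ref{eq:q} and \ref{eq:verify}. No probabilistic argument is needed, since the call, its crop and the sampled patches $\tilde o_{ik}$ are held fixed.

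\textbf{Part (a).} I substitute the assumed decomposition into $D_i=U^{\mathrm{real}}_i-U^{\mathrm{now}}_i$. The shared term $s(h_i)$ appears with opposite signs and cancels, leaving $T^{\mathrm{real}}_i-T^{\mathrm{now}}_i$. The edit statement is the special case in which the edit replaces $s$ by $s+\delta$ in both arms, so $D_i'=D_i$.

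\textbf{Part (b).} Here $h_i$ is unchanged, so $U^{\mathrm{now}}_i$ is unchanged, since the answer-now continuation does not contain $c_i$. Averaging the shifted patch scores gives $\bar U^{\mathrm{rand}\,\prime}_i=\bar U^{\mathrm{rand}}_i+\eta$ by linearity of the mean. Hence $E_i'=(U^{\mathrm{real}}_i+\eta)-(\bar U^{\mathrm{rand}}_i+\eta)=E_i$, while $D_i'=(U^{\mathrm{real}}_i+\eta)-U^{\mathrm{now}}_i=D_i+\eta$.

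\textbf{Part (c), the pointwise bound.} I split on $v_i$.
\begin{itemize}
\item If $v_i=0$, then $b_i=0$, and the bound holds because $\max\{E_i-\varepsilon,0\}\geq0$.
\item If $v_i=1$, then $q_i>\varepsilon$, so $b_i=\gamma(q_i-\varepsilon)>0$. Since $q_i=\min\{E_i,D_i\}\leq E_i$, this gives $b_i\leq\gamma(E_i-\varepsilon)=\gamma\max\{E_i-\varepsilon,0\}$, using $\gamma\ge0$.
\end{itemize}
For the edited quantities I repeat the same argument with primes. By part (b), $E_i'=E_i$, so $b_i'\leq\gamma\max\{E_i-\varepsilon,0\}$. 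This holds even though $D_i'$, and possibly $v_i'$, may change, and it shows that an edit to the call text cannot raise the cashback above what the evidence value alone supports.

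\textbf{Part (c), the capped-sum claim.} Let $S$ be the sum of the other calls' contributions, so $S\ge0$. I need $0\leq\min\{C,S+b_i\}-\min\{C,S\}\leq b_i$. Both inequalities follow because $x\mapsto\min\{C,x\}$ is nondecreasing and 1-Lipschitz, and $b_i\ge0$. If a fully explicit argument is wanted, I check the three cases $S+b_i\le C$, $S\le C<S+b_i$ and $C<S$.

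\textbf{Main obstacle.} The only delicate point is interpretive rather than technical: in (b), the answer-now arm must not contain $c_i$, which holds by the definition $U^{\mathrm{now}}_i=U_\theta(h_i)$. I would state this explicitly. I would also note that (c) bounds $b_i'$ by $E_i$, not by $q_i'$, because verification itself may flip after the edit.
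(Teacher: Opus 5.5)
Your proposal is correct and matches the paper's proof essentially step for step. You cancel $s(h_i)$ directly in (a). In (b) you use linearity of the patch mean together with the fact that $U^{\mathrm{now}}_i=U_\theta(h_i)$ omits $c_i$. In (c) you split on $v_i$ and use $q_i\leq E_i$, with $E_i'=E_i$ after the edit. The only cosmetic difference is that you get the capped-sum increment from monotonicity and 1-Lipschitzness of $x\mapsto\min\{C,x\}$, whereas the paper writes it in closed form as $\min\{b_i,\max\{C-S,0\}\}\in[0,b_i]$.
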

\begin{proof}
(a) By definition, $D_i=(T^{\mathrm{real}}_i+s(h_i))-(T^{\mathrm{now}}_i+s(h_i))=T^{\mathrm{real}}_i-T^{\mathrm{now}}_i$. An equal shift of both scores is the case $s(h_i')=s(h_i)+\delta$ with $T^{\mathrm{real}}_i$ and $T^{\mathrm{now}}_i$ unchanged, so $D_i'=D_i$.
(b) Averaging preserves the shift, $(\bar{U}^{\mathrm{rand}}_i)'=\frac{1}{K}\sum_{k}(U^{\mathrm{rand}}_{ik}+\eta)=\bar{U}^{\mathrm{rand}}_i+\eta$, so $E_i'=(U^{\mathrm{real}}_i+\eta)-(\bar{U}^{\mathrm{rand}}_i+\eta)=E_i$. The answer-now score $U^{\mathrm{now}}_i=U_\theta(h_i)$ does not contain $c_i$ and is untouched, so $D_i'=D_i+\eta$.
(c) If $v_i=0$ then $b_i=0$. If $v_i=1$ then $q_i>\varepsilon$ and $E_i\geq q_i$ by Equation~\ref{eq:qmax}, so $0<b_i=\gamma(q_i-\varepsilon)\leq\gamma(E_i-\varepsilon)=\gamma\max\{E_i-\varepsilon,0\}$. Under (b), $q_i'=\min\{E_i,D_i+\eta\}\leq E_i$, and the same two cases bound $b_i'$. Let $S\geq0$ be the pre-cap cashback of the other calls; the capped sum with and without call $i$ differs by $\min\{C,S+b_i\}-\min\{C,S\}=\min\{b_i,\max\{C-S,0\}\}\in[0,b_i]$, which completes the proof.
\end{proof}
A reward that paid the absolute score $U^{\mathrm{real}}_i$, or the sum $U^{\mathrm{real}}_i+\bar{U}^{\mathrm{rand}}_i$, would move by $\delta$ or $2\delta$ under the same shift and could be farmed by call text alone; under \method{} the shift cancels in the value it enters. A call-text shift can still raise $D_i$ and with it the cashback, but never above $\gamma\max\{E_i-\varepsilon,0\}$, which that shift leaves unchanged; this is the text-laundering row of Table~\ref{tab:threats}. The lemma does not make $D_i$ invariant to every edit of the prefix: reasoning written before the call that helps only the real branch is evidence, and $D_i$ should move with it. What the hypothesis excludes is movement common to the compared arms; any other score change reaches the cashback only through the verification gates, the deadzone, and the cap.

\subsection{The minimum is a worst-case advantage}
The two references answer different questions, and a reward could combine them in several ways: pay $E_i$, pay $D_i$, or pay an average. Proposition~\ref{prop:robust} singles out the minimum as the combination that a skeptic holding either reference cannot dispute. It is the advantage the crop keeps against every mixture of the two references, and the largest margin by which the real crop beats both at once.

\noindent\textbf{Proposition~\ref{prop:robust}} \textit{(restated). For $\lambda\in[0,1]$ let $B(\lambda)=\lambda U^{\mathrm{now}}_i+(1-\lambda)\bar{U}^{\mathrm{rand}}_i$. Then $\inf_{\lambda}[U^{\mathrm{real}}_i-B(\lambda)]=q_i$, and $q_i=\sup\{m:U^{\mathrm{real}}_i\geq U^{\mathrm{now}}_i+m,\ U^{\mathrm{real}}_i\geq\bar{U}^{\mathrm{rand}}_i+m\}$.}
\begin{proof}
Write $u=U^{\mathrm{real}}_i$, $w=U^{\mathrm{now}}_i$, and $r=\bar{U}^{\mathrm{rand}}_i$. For $\lambda\in[0,1]$,
\begin{equation*}
u-B(\lambda)=u-\lambda w-(1-\lambda)r=\lambda(u-w)+(1-\lambda)(u-r)=\lambda D_i+(1-\lambda)E_i,
\end{equation*}
which is affine in $\lambda$. An affine function on $[0,1]$ attains its infimum at an endpoint: at $\lambda=1$ with value $D_i$ when $w\geq r$, and at $\lambda=0$ with value $E_i$ when $w<r$. Hence $\inf_\lambda[u-B(\lambda)]=\min\{D_i,E_i\}=u-\max\{w,r\}=q_i$, the form of Equation~\ref{eq:qmax}. For the second claim, $u\geq w+m$ and $u\geq r+m$ hold exactly when $m\leq u-w$ and $m\leq u-r$, so the feasible set is $(-\infty,\min\{D_i,E_i\}]=(-\infty,q_i]$ and its supremum is $q_i$, which completes the proof.
\end{proof}
Taking the minimum is therefore not a tie-break between two tests. A fixed average $\lambda D_i+(1-\lambda)E_i$ would pay a call that helps against one reference and hurts against the other: a performative call with $D_i=0.4$ and $E_i=-0.1$ has value $0.15$ under $\lambda=\tfrac12$, which would earn cashback $0.05$ if the other gates pass, and value $-0.1$ under the minimum, which earns nothing and pays rent. The endpoint argument uses only that the objective is affine in the mixing weight, so a third reference, such as the gray patch or the blurred crop of Table~\ref{tab:probe-sens}, would enter through the maximum in Equation~\ref{eq:qmax} in the same way.

\subsection{The price never overrides the outcome}
The price is added to a binary outcome, so the first thing to check is that it cannot flip the ranking the outcome induces. Proposition~\ref{prop:dominance} bounds the price from both sides and shows that the bounds leave a gap.

\noindent\textbf{Proposition~\ref{prop:dominance}} \textit{(restated). Under (A2), a correct trajectory has $R(\tau)\geq1-\rho\,n_{\max}$, a wrong or unanswered trajectory has $R(\tau)=-\rho\,n(\tau)\leq0$, and every correct trajectory outranks any wrong or unanswered one by at least $1-\rho\,n_{\max}$.}
\begin{proof}
Every verified call has $q_i>\varepsilon$, so every $b_i\geq0$ and $0\leq\min\{C,\sum_ib_i\}\leq C$. By (A2), $0\leq\rho\,(n(\tau)-|\mathcal{V}(\tau)|)\leq\rho\,n(\tau)\leq\rho\,n_{\max}$. Together, $-\rho\,n_{\max}\leq P(\tau)\leq C$ for every trajectory. If $\tau$ is correct, $R(\tau)=1+P(\tau)\geq1-\rho\,n_{\max}$. If $\tau$ is wrong or unanswered, $r_{\mathrm{out}}(\tau)=0$ forces $v_i=0$ for every call by Equation~\ref{eq:verify}, so the cashback is zero and $R(\tau)=-\rho\,n(\tau)\in[-\rho\,n_{\max},0]$. For a correct $\tau_{\mathrm{c}}$ and a wrong or unanswered $\tau_{\mathrm{w}}$, $R(\tau_{\mathrm{c}})-R(\tau_{\mathrm{w}})\geq1-\rho\,n_{\max}+\rho\,n(\tau_{\mathrm{w}})\geq1-\rho\,n_{\max}$, which is positive by (A2) and completes the proof.
\end{proof}
This says the price shapes the ranking within a correctness class and never across one: however many calls either trajectory made, a correct one stays above a wrong one. The margin is linear in the budget, $1-\rho\,n_{\max}$, so the rent and the budget are set together: at our constants the margin is $0.7$, at $\rho=0.1$ it falls to $0.4$, the arm of Table~\ref{tab:ablation-full}, and at $\rho\,n_{\max}=1$ it vanishes. The cap $C=0.7$ is chosen to equal the margin, so a correct trajectory's reward lies in $[0.7,1.7]$ and cashback can never exceed what the outcome guarantees.

\subsection{Chance verification does not pay}
Verification decides on an estimate. $\bar{U}^{\mathrm{rand}}_i$ averages $K=3$ patch scores, so $q_i$ is a noisy estimate of the value $q_i^\star$ that would use the expected patch score $\mu_i$ of the fixed call in place of $\bar{U}^{\mathrm{rand}}_i$. Since $z\mapsto\max\{w,z\}$ is $1$-Lipschitz, Equation~\ref{eq:qmax} gives $|q_i-q_i^\star|\leq|\bar{U}^{\mathrm{rand}}_i-\mu_i|$, and for patch scores in $[0,1]$ that are independent given the call, Hoeffding's inequality bounds $\Pr(|q_i-q_i^\star|\geq t)$ by $2e^{-2Kt^2}$, which exceeds one at $K=3$ and $t=\varepsilon$. At these values the bound is vacuous and does not justify the deadzone, which is calibrated on null draws instead, Appendix~\ref{app:config}. Noise also pays asymmetrically. Suppose a call's true value is zero and its estimate errs by $+0.1$ or $-0.1$ with equal probability; if it were paid whenever the estimate cleared $\varepsilon=0.05$, the positive error would earn $\gamma\cdot0.05=0.025$ and the negative one nothing, an expected cashback of $0.0125$ per call for a crop that is worth nothing. With rent, the same call expects $\tfrac12(0.025)-\tfrac12(0.05)=-0.0125$. Proposition~\ref{prop:null} states the general condition.

\noindent\textbf{Proposition~\ref{prop:null}} \textit{(restated). Let $X_i$ be the uncapped price of an executed call, $X_i=\gamma(q_i-\varepsilon)$ if $v_i=1$ and $X_i=-\rho$ otherwise, and let $A_i=\{q^\star_i\leq\varepsilon\}$ with $\Pr(A_i)>0$. Under (A1), if $\Pr(v_i=1\mid A_i)\leq\alpha\in[0,1)$, then $\mathbb{E}[X_i\mid A_i]\leq\alpha\gamma(1-\varepsilon)-(1-\alpha)\rho$, which is negative under (A3).}
\begin{proof}
The uncapped trajectory price is $\sum_iX_i$, since the cashback sum runs over verified calls and the rent over the others. Condition on $A_i$ throughout and let $p=\Pr(v_i=1\mid A_i)\leq\alpha$. On the verified event, $q_i\leq1$ by (A1), so $X_i\leq\gamma(1-\varepsilon)$; on the unverified event, $X_i=-\rho$. Hence
\begin{equation*}
\mathbb{E}[X_i\mid A_i]\leq p\,\gamma(1-\varepsilon)-(1-p)\,\rho=p\,\big(\gamma(1-\varepsilon)+\rho\big)-\rho\leq\alpha\gamma(1-\varepsilon)-(1-\alpha)\rho,
\end{equation*}
where the last step uses $p\leq\alpha$ and $\gamma(1-\varepsilon)+\rho>0$. The bound is negative if and only if $\alpha(\gamma(1-\varepsilon)+\rho)<\rho$, that is, $\alpha<\rho/(\rho+\gamma(1-\varepsilon))$, equivalently $\rho>\frac{\alpha}{1-\alpha}\gamma(1-\varepsilon)$, which is (A3). At our constants the threshold is $0.05/(0.05+0.5\cdot0.95)=0.05/0.525\approx9.5\%$, which completes the proof.
\end{proof}
Read as a condition on the rent, (A3) is $\rho>\frac{\alpha}{1-\alpha}\gamma(1-\varepsilon)$: the rent must exceed the cashback that chance verification can collect. The bound is a worst case that credits every chance verification with the largest possible cashback; at the measured chance rate of $2.5\%$ it gives $\mathbb{E}[X_i]\leq0.025\cdot0.475-0.975\cdot0.05\approx-0.037$. Capping only lowers this: with $S_{i-1}$ the pre-cap cashback of the calls before $i$, the allocated capped price $Y_i=\min\{C,S_{i-1}+b_i\}-\min\{C,S_{i-1}\}-\rho(1-v_i)$ satisfies $Y_i\leq X_i$ by Lemma~\ref{lem:shared}(c) and $\sum_iY_i=P(\tau)$, so the bound on $\mathbb{E}[X_i]$ also bounds $\mathbb{E}[Y_i]$. The premise is a property of the verifier rather than of the policy. For a fixed call distribution, verification implies clearing $\varepsilon$, so the chance rate of clearing $\varepsilon$ bounds the verification rate of a call with no evidence; the null calibration estimates that rate on the cold start, with sampling error, and does not track the call distribution as training moves it. Training does not enforce the condition, and a rate above $2/21$, about $9.5\%$, would void the guarantee rather than reverse it, since the bound is an upper bound on the expectation.

\subsection{Normalization and the price scale}
GRPO divides each group's reward deviations by their standard deviation. When the price is part of a single reward and the outcome is unanimous, that standard deviation is the price's own spread, and dividing by it erases the units the rent was set in. Proposition~\ref{prop:scale} makes this exact in the cleanest case, and the corollary shows that the dual channel keeps Proposition~\ref{prop:dominance} at the level of advantages.

\noindent\textbf{Proposition~\ref{prop:scale}} \textit{(restated). Under (A4), in a group whose rollouts are all correct, carry no verified call, and differ in their unverified call counts $n_j$, the single-channel advantage is $A_j=-(n_j-\bar n)/\mathrm{std}(n)$, independent of $\rho$, whereas Equation~\ref{eq:dual-channel} gives $\widehat{A}_j=-\rho\,(n_j-\bar n)$.}
\begin{proof}
With $n_j=n(\tau_j)$, every rollout has $O_j=1$, $P_j=-\rho\,n_j$, and $R_j=1-\rho\,n_j$, so $R_j-\bar R=-\rho\,(n_j-\bar n)$ and $\mathrm{std}(R)=\rho\,\mathrm{std}(n)$, which is positive because the counts differ. The group is therefore retained by the single-channel arm, and $A_j=(R_j-\bar R)/\mathrm{std}(R)=-(n_j-\bar n)/\mathrm{std}(n)$: the factor $\rho$ cancels. In Equation~\ref{eq:dual-channel}, $O_j-\bar O=0$ and the price channel is active because the group contains correct rollouts, so $\widehat{A}_j=0/\sigma_{\min}+(P_j-\bar P)=-\rho\,(n_j-\bar n)$, which completes the proof.
\end{proof}
In the single channel, a rent of $0.05$ and a rent of $0.5$ produce the same update in such a group: the policy is told which rollout called more, never by how much that cost. In the all-correct groups the proposition covers, with no verified call, every reward difference comes from unverified call counts, so these are the groups in which the rent alone shapes the policy. In mixed groups the scale is lost in the other direction: single-channel normalization divides a price difference by $\mathrm{std}(O+P)$, which depends on both channels and whose outcome part alone lies between $0.35$ and $0.54$, the worked example of Appendix~\ref{app:config}. The single-channel arm of Table~\ref{tab:ablation} shows the consequence, $0.78$ calls per question and an average of $64.5$, below the cold start. The dual channel keeps the price in its own units in every group that contains a correct rollout, not only in unanimous ones: two rollouts with the same outcome have equal outcome terms, so $\widehat{A}_j-\widehat{A}_k=P_j-P_k$, and with otherwise equal prices one extra unverified call lowers their advantage difference by exactly $\rho$. Each statement rests on one implementation condition in (A4): the single-channel cancellation needs a denominator with no additive floor, since a floor $\eta$ would turn it into an approximation valid only while $\rho\,\mathrm{std}(n)\gg\eta$, and the dual-channel identity needs an active, unnormalized price channel; the rule of Section~\ref{sec:optimization} for groups with no correct rollout lies outside both statements. The outcome channel still dominates the price channel.
\begin{corollary}[Outcome gap of the dual-channel advantage]
\label{cor:gap}
Under (A2) and (A4), in a group with both outcomes, a correct rollout $\tau_j$ and a wrong or unanswered rollout $\tau_\ell$ satisfy $\widehat{A}_j-\widehat{A}_\ell\geq1/\max\{\mathrm{std}(O),\sigma_{\min}\}-\rho\,n_{\max}$, which is at least $\sqrt{7/2}-0.3\approx1.57$ at our constants.
\end{corollary}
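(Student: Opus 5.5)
The plan is to write the difference $\widehat{A}_j-\widehat{A}_\ell$ from Equation~\ref{eq:dual-channel} as an outcome term plus a price term and bound each separately. The group means $\overline{O}$ and $\overline{P}$ are shared, so they cancel in the difference:
\[
\widehat{A}_j-\widehat{A}_\ell=\frac{O_j-O_\ell}{\max\{\mathrm{std}(O),\sigma_{\min}\}}+(P_j-P_\ell).
\]
The group holds a correct rollout, so the price channel is active under (A4) and the zero-price rule for groups without a correct rollout does not apply. Since $O_j=1$ and $O_\ell=0$, the first term equals $1/\max\{\mathrm{std}(O),\sigma_{\min}\}$ exactly.

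For the price term, I would reuse the two-sided bounds from the proof of Proposition~\ref{prop:dominance}. For the correct rollout, the capped cashback is nonnegative and total rent is at most $\rho\,n_{\max}$ by (A2), so $P_j\geq-\rho\,n_{\max}$. For the wrong or unanswered rollout, Equation~\ref{eq:verify} forces $v_i=0$ for every call, so the cashback vanishes and $P_\ell=-\rho\,n(\tau_\ell)\leq0$. Together these give $P_j-P_\ell\geq-\rho\,n_{\max}$, which yields the general bound.

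For the numerical constant, I need an upper bound on $\max\{\mathrm{std}(O),\sigma_{\min}\}$ in a mixed group of $G=8$ binary outcomes. With $k$ correct rollouts, the sample variance dividing by $G-1$ is
\[
\frac{k(G-k)}{G(G-1)},
\]
which is maximized at $k=4$, giving $16/56=2/7$. So $\mathrm{std}(O)\leq\sqrt{2/7}\approx0.53$, which exceeds $\sigma_{\min}=0.15$. The denominator is therefore at most $\sqrt{2/7}$, and the outcome term is at least $\sqrt{7/2}$. Subtracting $\rho\,n_{\max}=0.05\cdot6=0.3$ gives $\approx1.57$.

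The only step that needs care is the variance maximization. One must check that the floor never exceeds the std bound, so that the max is always at most $\sqrt{2/7}$. For mixed groups the floor indeed never binds: the minimum, at $k=1$, is $\sqrt{1/8}\approx0.35$, which is already above $0.15$.
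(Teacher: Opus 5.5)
Your proposal is correct and follows the paper's proof essentially step for step. It cancels the group means, reads off $1/\max\{\mathrm{std}(O),\sigma_{\min}\}$ from the outcome channel, bounds $P_j-P_\ell\geq-\rho\,n_{\max}$ using the two price bounds from the proof of Proposition~\ref{prop:dominance}, and caps the sample variance of eight binary outcomes at $k(8-k)/56\leq16/56$. Your explicit checks that the price channel is active in a mixed group and that $\sigma_{\min}=0.15$ lies below $\sqrt{2/7}$ are small points the paper leaves implicit.
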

\begin{proof}
Let $d=\max\{\mathrm{std}(O),\sigma_{\min}\}$. Since $O_j=1$ and $O_\ell=0$, the group means cancel in the difference and $\widehat{A}_j-\widehat{A}_\ell=1/d+P_j-P_\ell\geq1/d-\rho\,n_{\max}$, using $P_j\geq-\rho\,n_{\max}$ and $P_\ell\leq0$ from the proof of Proposition~\ref{prop:dominance}. With $k$ correct rollouts of eight, $\mathrm{std}(O)^2=k(8-k)/56\leq16/56$, so $d\leq\sqrt{16/56}\approx0.535$ and $1/d-\rho\,n_{\max}\geq\sqrt{56/16}-0.3=\sqrt{7/2}-0.3$, which completes the proof.
\end{proof}
The gap stays positive as long as $\rho\,n_{\max}<1/\max\{\mathrm{std}(O),\sigma_{\min}\}$; uniformly over mixed groups of eight this holds whenever $\rho\,n_{\max}<\sqrt{7/2}\approx1.87$, a weaker condition than (A2), and the outcome channel, not the price channel, decides the order of a correct and a wrong rollout.

\subsection{Examples}
Table~\ref{tab:theory-numbers} collects the numbers the results take at the constants of Appendix~\ref{app:config}.
\begin{table}[h]
\caption{The results at the paper's constants, $\gamma=0.5$, $\varepsilon=0.05$, $\rho=0.05$, $C=0.7$, $n_{\max}=6$, $\sigma_{\min}=0.15$, $G=8$.}
\label{tab:theory-numbers}
\centering\small
\begin{tabular}{llr}
\toprule
\textbf{Quantity} & \textbf{Expression} & \textbf{Value} \\
\midrule
Outcome margin, Proposition~\ref{prop:dominance} & $1-\rho\,n_{\max}$ & $0.70$ \\
Reward range of a correct trajectory & $[1-\rho\,n_{\max},\,1+C]$ & $[0.70,1.70]$ \\
Verifier reliability threshold, (A3) & $\rho/(\rho+\gamma(1-\varepsilon))$ & $9.5\%$ \\
Measured chance rate of clearing $\varepsilon$ & V$^\ast$, HR-4K, free-form & $2.4\%$, $2.5\%$, $3.0\%$ \\
Worst-case expected price of a chance call at $\alpha=2.5\%$ & $\alpha\gamma(1-\varepsilon)-(1-\alpha)\rho$ & $-0.037$ \\
Advantage gap, Corollary~\ref{cor:gap} & $\sqrt{7/2}-\rho\,n_{\max}$ & $1.57$ \\
Pairwise advantage decrement per extra unverified call & dual $\rho$; single $1/\mathrm{std}(n)$ & $0.05$; $\rho$ cancels \\
\bottomrule
\end{tabular}
\end{table}

\subsection{Scope and assumptions}
Three assumptions carry the analysis, and we state them plainly. First, all scores are the policy's own gold-answer probabilities under a fixed elicitation, so every statement is about the reward as computed, not about what the task requires or what the network attends to. Second, Proposition~\ref{prop:null} is an expectation statement whose premise is a conditional false-verification rate; the null calibration replaces the real observation by one random patch and keeps the call's own $D_i$, so it is a proxy for that rate rather than a measurement of it, it is taken on the cold start rather than controlled during training, and the statement says nothing about the variance of the price or about finite-sample behaviour. Third, the free-form score is a mean log-probability, which has no finite range in general; the statements that use (A1) apply to free-form calls only with an explicit bound $B$ on the score range, with $\gamma(1-\varepsilon)$ replaced by $\gamma(B-\varepsilon)$, and the deadzone for free-form calls is calibrated separately. We make no claim that the counterfactual values identify a causal effect of the pixels on the answer, and no convergence claim for the optimizer: the guarantees concern the reward and the advantage that training receives, and the experiments of Section~\ref{sec:experiments} are the evidence for what training does with them.

\section{Data Curation Details}
\label{app:data}
Cold-start trajectories are removed when a crop is degenerate or repeats an earlier crop in the same trajectory, when the reasoning or the tool call cannot be parsed, or when rollouts of the same question end in different final answers; 6,960 of the 7,267 released trajectories remain. The RL pool is built once, before any RL run, in five steps.
\begin{enumerate}
\item Source pool. The VisualProbe training split, 4,000 prompts, and a sample of 24,000 prompts from the DeepEyes training data, stratified over its three sub-sources with 16,800 from visual search, 3,600 from ArxivQA, and 3,600 from ThinkLite-VL, 28,000 prompts in total.
\item Near-duplicate removal. Prompts that share both a perceptual image hash and a question are collapsed to one, leaving 26,600.
\item Decontamination. A prompt is removed when its image lies within perceptual-hash Hamming distance 5 of any image in V$^\ast$, HR-Bench-4K, HR-Bench-8K, MME-RealWorld, MMStar, ChartQA, or BLINK, leaving 26,200.
\item Format filter. Prompts whose reference answer is missing or cannot be parsed into an option letter or a short string are removed, leaving 25,700.
\item Difficulty filter. Each remaining prompt is sampled eight times from the SFT checkpoint under the training decoding settings and bucketed by the number of correct samples under the correctness rule of Section~\ref{sec:setup}. Prompts with no correct sample are removed, prompts with one to seven correct samples are kept, and a random half of the prompts with eight correct samples is kept, so that groups with zero outcome variance do not dominate the pool. The final pool has 16,000 prompts.
\end{enumerate}
Every RL arm trains on the same list.

\section{Training Configuration}
\label{app:config}

\paragraph{Reward constants.}
$\gamma=0.5$, $\varepsilon=0.05$ for multiple-choice calls and $\varepsilon_{\mathrm{free}}=0.44$ for free-form calls, $\rho=0.05$, $C=0.7$, $K=3$, $n_{\max}=6$, and $\sigma_{\min}=0.15$. Every arm uses the budget $n_{\max}=6$. $\rho=0.05$ gives $\rho\,n_{\max}=0.3<1$, the condition of Proposition~\ref{prop:dominance}, and $C=1-\rho\,n_{\max}=0.7$ caps cashback at the separation that the proposition guarantees. $\varepsilon=0.05$ is the first grid value of the null calibration at which the false-verification rate falls under 3\%, 2.39\% on V$^\ast$ and 2.51\% on HR-Bench-4K. $K=3$ is the smallest number of patches beyond which the used label no longer moves, Table~\ref{tab:probe-sens}: agreement with $K=3$ is 94.7\% at $K=1$ and 97.5\% at $K=2$, and $K=5$ changes 2\% of the labels. $\gamma=0.5$ makes cashback comparable to the rent: cold-start verified calls at the median margin $q=0.12$ and the upper quartile $0.23$ receive $0.03$ and $0.09$ against a rent of $0.05$, and six verified calls reach the cap only at a mean margin above $0.28$. $\sigma_{\min}=0.15$ lies below $0.35$, the smallest non-zero outcome standard deviation in a group of eight, so it acts only in unanimous groups. Table~\ref{tab:ablation-full} sweeps $\gamma$ and $\rho$.

\paragraph{Random-patch variance.}
Given the prefix and the call, if the $K$ random-patch scores are independent draws from the reference distribution, the Monte Carlo part of the variance of $E_i$ is their variance divided by $K$; the real-crop score and the prefix are fixed, so $K$ controls the random-patch side only. Table~\ref{tab:probe-sens} shows the used label stabilizing in $K$: one patch reproduces 94.7\% of the labels of $K=3$, two patches 97.5\%, and five patches 97.8\%, so beyond three the extra score evaluations buy almost no change in which calls verify.

\paragraph{Calibration of the deadzone.}
\begin{figure}[h]
\centering
\includegraphics[width=0.94\linewidth]{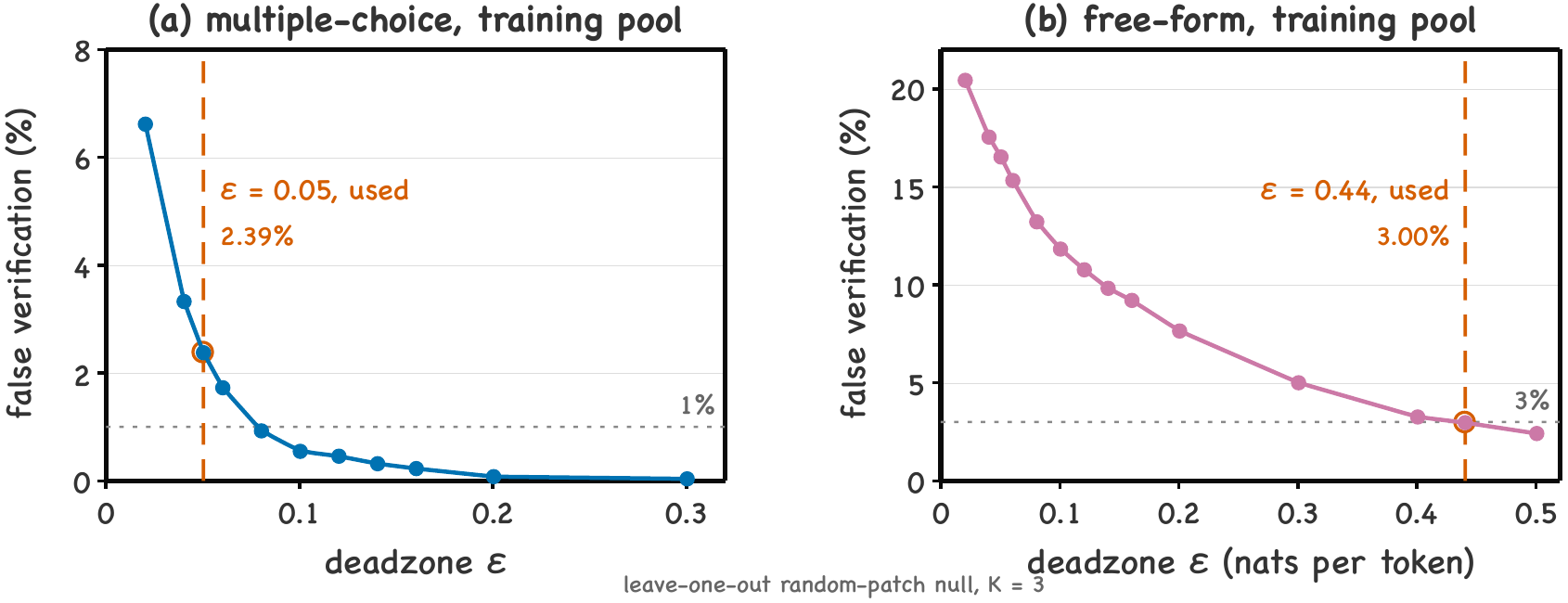}
\caption{Null calibration of $\varepsilon$ with $K=3$. Leave-one-out null: each random-patch reading stands in for the real observation and is contrasted with the mean of the other patches, so its evidence has zero expectation; the call's own $D$ is kept. Curves give the share of null draws with $\min(E,D)>\varepsilon$. (a) Multiple-choice, 500 training-pool questions probed from the cold start; the V$^\ast$ and HR-Bench-4K curves coincide with it at every grid value, 2.39\% and 2.51\% at $\varepsilon=0.05$. (b) Free-form, 500 training-pool questions scored by the mean log-probability per token of the reference; the first grid value under 3\% is $\varepsilon_{\mathrm{free}}=0.44$. The expected payment of a null draw is negative at every grid value in both panels.}
\label{fig:eps-null}
\end{figure}
The deadzone was set before training from draws that use random patches only. For each probed call, each random-patch score in turn stands in for the real observation and is contrasted with the mean of the remaining patches, a leave-one-out null whose evidence has zero expectation when the patches are exchangeable, while the call's own $D_i$ is kept. A null draw is falsely verified when its $\min(E,D)$ exceeds the candidate $\varepsilon$. Probes are scored with the parameters that generated the rollout, in evaluation mode, and are cached by prefix and call index. Every call's two values, verification status, and billing reason are logged, so that an unparsable call cannot silently become a free trajectory. Wrong or unanswered trajectories run no probes.

The verification threshold $\varepsilon=0.05$ was fixed before training from the null calibration in Figure~\ref{fig:eps-null}(a) on 500 multiple-choice questions of the training pool, which gives the share of zero-evidence draws that verify at each grid value; the same calibration on the cold-start probes of V$^\ast$ and HR-Bench-4K reproduces the curve, 2.39\% and 2.51\% at $\varepsilon=0.05$, and the expected payment of a zero-evidence draw is negative at every $\varepsilon$ in the grid. The free-form deadzone was calibrated by the same procedure on 500 free-form questions of the training pool, 1,389 calls and 4,167 null draws, with the score of Appendix~\ref{app:protocol}, the mean log-probability per token of the full reference under teacher forcing, and the answer-now anchor. In these units the multiple-choice threshold is far too low: the false-verification rate is 16.6\% at $\varepsilon=0.05$ and 11.9\% at $0.10$, and the first grid value under 3\% is $\varepsilon_{\mathrm{free}}=0.44$ nats per token, 125 of 4,167 draws, which is the deadzone used for free-form calls, Figure~\ref{fig:eps-null}(b). The expected payment of a null draw is negative at every grid value up to $0.30$, from $-0.018$ at $0.02$ to $-0.039$ at $0.30$.

\paragraph{Cold-start training.}
Qwen2.5-VL-7B-Instruct is fine-tuned on the 6,960 cold-start trajectories for 3 epochs with a learning rate of $10^{-5}$ and a global batch of 32, training all parameters except the vision encoder and the projector, at a maximum input-pixel budget of 2M.

\paragraph{RL training.}
GRPO runs for three epochs over the pool of Appendix~\ref{app:data}, 188 steps in total, with 256 prompts per step, eight samples per prompt, mini batches of 32, a constant learning rate of $10^{-6}$, and no KL or entropy regularization. Rollouts use temperature 1.0 and at most six tool interactions, on eight NVIDIA H100 80\,GB GPUs.

\paragraph{Training cost.}
Both RL arms score every eligible call with the probe, the outcome-only arm for logging only, so their step times are directly comparable: the mean wall-clock time of a training step is 2,972\,s for CounterCredit and 2,881\,s for outcome-only GRPO on eight H100 GPUs, a ratio of 1.03, so the reward computation beyond probing is negligible. A step probes about 2,200 eligible calls at $K+2$ score evaluations each, one forward pass without generation per evaluation, with prefixes and calls cached. A run with the probes disabled averages 2,320\,s per step, so probing adds 24\% and \method{} costs 1.28 times a standard outcome-only pipeline; $K=1$ would cut the probe cost by two fifths at the label agreement of Table~\ref{tab:probe-sens}. Inference carries no probe cost: the probes run only during training, and the latencies of Table~\ref{tab:calls} are measured without them.

Table~\ref{tab:threats} lists the ways a policy could farm credit under the reward of Section~\ref{sec:billing} and the component that blocks each of them in isolation.

\begin{table}[h]
\caption{Ways to farm credit and the component that blocks each. Interactions between behaviours and gaming of the scoring policy itself are outside this table.}
\label{tab:threats}
\centering\footnotesize
\setlength{\tabcolsep}{4pt}
\begin{tabular}{p{2.6cm}p{5.2cm}p{5.0cm}}
\toprule
\textbf{Behaviour} & \textbf{What the policy does} & \textbf{How credit is constrained} \\
\midrule
Text laundering & names the answer in the call's arguments & when the call text shifts the real and every random-patch score equally, that common shift cancels in $E_i$ (Lemma~\ref{lem:shared}) \\
Performative call & the branch changes the answer, the pixels do not & $E_i\approx0$ fails $\min(E_i,D_i)>\varepsilon$ \\
Unnecessary look & informative pixels after the answer was already settled & $D_i\leq\varepsilon$ fails $\min(E_i,D_i)>\varepsilon$ \\
Looping, malformed calls & keeps calling, or never answers & never verified; each executed call pays $\rho$ \\
\bottomrule
\end{tabular}
\end{table}

Algorithm~\ref{alg:cc} lists the reward and advantage computation of Section~\ref{sec:optimization} for one prompt group; the constants are those of Section~\ref{sec:billing}. The worked example behind the dual-channel advantage, with the sample standard deviation the trainer uses: in a group of eight rollouts with seven correct and one wrong, the outcome standard deviation is $0.35$, so a price difference of $0.05$ between two otherwise identical correct rollouts becomes about $0.14$ outcome-normalized units under standard GRPO, nearly three times its face value; with six correct of eight it is about $0.11$, and with eight correct of eight the standard deviation is zero and the price is the only signal left; with no correct rollout the price advantage is set to zero. Standard GRPO divides by the standard deviation of the total reward, which differs from that of the outcome by the price variance, a few hundredths, so the outcome standard deviation stands in for it here. With eight rollouts per group and binary outcomes, that standard deviation takes only five values: zero in a unanimous group and between $0.35$ and $0.54$ otherwise, so standard GRPO leaves the price without a fixed scale in every unanimous group and scales a rent of $0.05$ to between $0.09$ and $0.14$ outcome units in every other group; Proposition~\ref{prop:scale} gives the exact form of the all-correct case. The single-channel arm of Table~\ref{tab:ablation} normalizes $R(\tau)$ by its group standard deviation and drops groups whose total reward is constant across the eight rollouts; in a unanimous group with unequal prices, the price differences are therefore rescaled to unit variance.

\begin{algorithm}[h]
\caption{CounterCredit reward and advantage for one prompt group}
\label{alg:cc}
\begin{algorithmic}[1]
\Require rollouts $\tau_1,\dots,\tau_G$; constants $\varepsilon,\rho,\gamma,C,K,\sigma_{\min}$
\For{$j=1,\dots,G$}
  \State $O_j\leftarrow$ judge$(\tau_j)$; \quad $P_j\leftarrow0$; \quad $b_j\leftarrow0$
  \For{each executed call $i$ of $\tau_j$}
    \If{$O_j=0$, or $\tau_j$ is unanswered, or the call did not return exactly one image, or no distinct patch exists}
      \State $P_j\leftarrow P_j-\rho$; \textbf{continue}
    \EndIf
    \State $U^{\mathrm{real}}_i\leftarrow$ score$(h_i,c_i,o_i)$; \quad $U^{\mathrm{now}}_i\leftarrow$ score$(h_i)$; \quad $\bar{U}^{\mathrm{rand}}_i\leftarrow\frac{1}{K}\sum_k$ score$(h_i,c_i,\tilde{o}_{ik})$
    \State $E\leftarrow U^{\mathrm{real}}_i-\bar{U}^{\mathrm{rand}}_i$; \quad $D\leftarrow U^{\mathrm{real}}_i-U^{\mathrm{now}}_i$; \quad $q\leftarrow\min(E,D)$
    \If{$E,D$ finite and $q>\varepsilon$} \State $b_j\leftarrow b_j+\gamma(q-\varepsilon)$ \Else \State $P_j\leftarrow P_j-\rho$ \EndIf
  \EndFor
  \State $P_j\leftarrow P_j+\min(C,b_j)$
\EndFor
\If{$O_j=0$ for all $j$} \State $P_j\leftarrow\overline{P}$ for all $j$ \EndIf
\State $\widehat{A}_j\leftarrow(O_j-\overline{O})/\max\{\mathrm{std}(O),\sigma_{\min}\}+(P_j-\overline{P})$ for all $j$
\end{algorithmic}
\end{algorithm}

\section{Evaluation Protocol Details}
\label{app:protocol}
All benchmarks are loaded and scored through VLMEvalKit~\citep{duan2024vlmevalkit} with its default per-benchmark rules: option-letter extraction for the multiple-choice benchmarks, relaxed accuracy for ChartQA, and the kit's answer matching for free-form questions. Each model runs its released prompts and tool loop under greedy single-pass decoding; our model uses the same tool budget, image budget, and prompts as in training. Closed models are queried through their public APIs with the standard VLMEvalKit prompts, temperature 0, and no tool loop.

Latency is the wall-clock time of one agent rollout per question at batch size 1 on one NVIDIA H20 with vLLM in BF16, measured from the first model generation to the final answer or stop. It includes every generation turn, every crop or Python tool execution, and the multi-turn interaction. It excludes model loading, dataset reading and initial image decoding, the offline counterfactual probes, and the answer judge for free-form responses, which runs after timing ends.

\paragraph{Judge check.}
We sampled 500 free-form answers from the training pool that reached the GPT-4o judge and labelled each by hand against the reference. The judge agreed with the label on 497, a 99.4\% agreement; the three disagreements were answers that named the correct entity with an extra qualifier the judge rejected.

\paragraph{Free-form score.}
For a reference answer $y$ with tokens $y_1,\dots,y_T$, the score of a prefix $p$ is $\frac{1}{T}\sum_{t=1}^{T}\log\pi_\theta(y_t\mid p,e,y_{<t})$, computed under teacher forcing with the assistant turn prefilled by the answer tag; $y$ is the reference string of the dataset as released. $E_i$ and $D_i$ are differences of this score, so they are in nats per token, and the deadzone in these units is $\varepsilon_{\mathrm{free}}=0.44$, Appendix~\ref{app:config}.

\paragraph{Spurious-call rate.}
The code-based agents can perform operations beyond cropping; every executed call that returns an image is treated as a look and audited in the same way, since each such call raises the same two questions. For every agent, $E_i$ and $D_i$ are computed with the agent's own gold-answer score on its own prefixes, with the same $K=3$ random same-size patches and the same deadzone $\varepsilon=0.05$ as in training. For a call whose returned image is not a crop, the reference is random patches from $I$ of the returned image's size; for a full-size return this is the image itself, so $E_i$ then measures the effect of the transform alone. Random patches pass through the same processing as the real observation. Calls that return no image are excluded from the denominator.

\section{Prompts and Formats}
\label{app:prompts}
Every arm trained in this paper uses the Mini-o3 chat format~\citep{Lai_2026_Mini-o3}; the released baselines run in their own formats. A trajectory alternates \texttt{<think>} reasoning, a \texttt{<grounding>} call written as a bounding box, and the returned crop as an image turn; $h_i$ ends with the \texttt{<think>} block before call $i$, $c_i$ is the \texttt{<grounding>} block, and $o_i$ is the image that follows it. The system prompt below is the only one used, for the cold start, for every RL arm, and at evaluation; the user turn holds the image followed by the question and its options, one option per line. Complete trajectories are given in Appendix~\ref{app:case}.

\begin{promptbox}[System prompt]{pbblue}{pbblueT}
\begin{lstlisting}
You are a helpful assistant. Answer the user's question based on the image provided. Output your thinking process within the <think> and </think> tags. Whenever you find anything unclear, you can zoom in on a specific region in the given image to see more clearly by outputting <grounding>{"bbox_2d": [x0, y0, x1, y1], "source": "original_image"}</grounding>, where (x0, y0) and (x1, y1) are the top-left and bottom-right coordinates of the region that you want to zoom in, respectively (suppose the width and height of the image are 1.0), and 'source' refers to the image that you zoom in and could be either 'original_image' or 'observation_i'. Once the final answer is confirmed, put it within <answer> and </answer>.
\end{lstlisting}
\end{promptbox}

\noindent A \texttt{<grounding>} call is executed by cropping the named source at the given box, and the crop is returned in a user turn of the following fixed form, where $i$ is the zero-based call index.

\begin{promptbox}[Tool-return template]{pbgreen}{pbgreenT}
\begin{lstlisting}
After the above Action {i}, here is  the zoom-in image (Observation {i+1}):
[image]
.
Continue your reasoning process inside <think> and </think>. If needed, you can continue to zoom in on the original image or any of the observations, by outputting <grounding> and </grounding> as before. If the final answer is confirmed, put your final answer inside <answer> and </answer>.
\end{lstlisting}
\end{promptbox}

\noindent The probe of Section~\ref{sec:setting} appends the fixed string $e$ below as a user turn to the prefix being scored, merged into the trailing user turn when the prefix already ends in one, and prefills the assistant turn with \texttt{<answer>} followed by a space, so that the first generated token is the option letter; free-form questions use the second string below, and the reference answer is then scored under teacher forcing, Appendix~\ref{app:protocol}. The gold-answer score is the probability of the gold letter renormalized over the option letters at that position. The same string and prefill are used for the answer-now, real-observation, and random-observation continuations.

\begin{promptbox}[Answer elicitation $e$ for multiple-choice questions]{pborange}{pborangeT}
\begin{lstlisting}
Based on everything so far, answer the original question now. Reply with the option letter only.
\end{lstlisting}
\end{promptbox}
\begin{promptbox}[Answer elicitation $e$ for free-form questions]{pborange}{pborangeT}
\begin{lstlisting}
Based on everything so far, answer the original question now. Reply with the answer only.
\end{lstlisting}
\end{promptbox}

\section{Additional Results}
\label{app:results}

\subsection{What each existing reward pays for}\label{sec:F1}
The outcome reward credits every executed call on a correct trajectory. The branch probe of TACO scores the committed answer, the argmax over option letters under a rule-based checker, with and without the tool branch, and credits a call when the branch turns a wrong answer right. The external judge follows CodeV's reward prompt verbatim, served by GPT-4o, scores each returned crop 1, 0.5, or 0.25, and credits it at 1. The self-judge follows the FaithEyes prompt verbatim, asking the policy itself whether the crop is helpful in a fresh context holding only the question and that crop, and credits it when the probability of a helpful verdict exceeds 0.5. Table~\ref{tab:audit-full} resolves Figure~\ref{fig:fake_call}(b) by benchmark, over 2,594 scored calls: the branch probe has the highest share of paid calls passing both tests on every benchmark, and the three rewards that do not intervene trade places below it.

\begin{table}[h]
\caption{The reward audit of Figure~\ref{fig:fake_call}(b) per benchmark, on the SFT checkpoint's trajectories. Paid: calls credited per 100 executed calls; of those, the share needed ($D_i>0.05$), used ($E_i>0.05$), and both. The last row pays every call and gives the base rates. \method{} is omitted because its verification is defined by the same two values.}
\label{tab:audit-full}
\centering\footnotesize
\setlength{\tabcolsep}{2.3pt}
\begin{tabular}{lcccccccccccc}
\toprule
 & \multicolumn{4}{c}{\textbf{V$^\ast$}} & \multicolumn{4}{c}{\textbf{HR-4K}} & \multicolumn{4}{c}{\textbf{HR-8K}} \\
\cmidrule(lr){2-5}\cmidrule(lr){6-9}\cmidrule(lr){10-13}
\textbf{Reward} & \textbf{Paid} & \textbf{Needed} & \textbf{Used} & \textbf{Both} & \textbf{Paid} & \textbf{Needed} & \textbf{Used} & \textbf{Both} & \textbf{Paid} & \textbf{Needed} & \textbf{Used} & \textbf{Both} \\
\midrule
Outcome reward & 40.2 & 17.4 & 18.6 & 12.8 & 39.2 & 23.2 & 22.5 & 12.4 & 33.7 & 31.1 & 24.0 & 14.6 \\
Branch probe (TACO) & 5.3 & 61.1 & 50.0 & 41.7 & 5.6 & 52.5 & 50.8 & 41.7 & 6.7 & 60.7 & 57.2 & 49.7 \\
External judge & 23.4 & 30.2 & 26.9 & 21.0 & 28.6 & 33.1 & 30.4 & 19.3 & 24.8 & 32.6 & 32.3 & 22.4 \\
Self-judge & 19.9 & 22.1 & 27.2 & 14.7 & 33.0 & 24.6 & 33.1 & 20.9 & 28.0 & 29.4 & 30.5 & 22.6 \\
\rowcolor{blockbg}\textit{Every call paid} & 100 & 13.9 & 15.1 & 11.8 & 100 & 16.1 & 18.2 & 9.8 & 100 & 18.1 & 16.5 & 10.4 \\
\bottomrule
\end{tabular}
\end{table}

\paragraph{Common scorer.} The spurious-call rates of Table~\ref{tab:calls} use each agent's own gold-answer score on its own prefix. To check that the ordering is not an artifact of the scorer, we re-scored the executed calls of \method{} and of all seven baselines on V$^\ast$ with a single scorer, the SFT cold start, holding each agent's prefixes and returned crops fixed. \method{} keeps the lowest spurious-call rate under the common scorer, and on the same rollouts the ordering of the seven baselines under their own scorers and under the common scorer agree up to one adjacent swap, Pixel-Reasoner and PyVision-RL, with Spearman $\rho=0.96$ between the two orderings. Absolute rates depend on the scorer, so only the ordering is compared across agents. A third scorer that took no part in training or in any metric of this paper, GPT-5.6 Terra~\citep{openai2026gpt56}, recomputes the score of Equation~\ref{eq:U} on the same prefixes, calls, and observations, reading the gold option's log-probability from the interface and renormalizing it over the option letters, with structured outputs disabled so that the log-probabilities are returned. The share of calls whose real observation does not beat the mean of its three random patches under this reader runs from $20.0\%$ for Thyme to $74.4\%$ for DeepEyesV2, orders the agents at Spearman $\rho=0.75$ against the common scorer, and again leaves \method{} the lowest. The check asks whether an outside reader gains from the returned pixels at the same state; it does not reproduce the spurious-call rate, which also requires $D_i$ and the deadzone.

\paragraph{Answer-level counterfactual.} The four regions are defined by probability contrasts, so we also test them with a different endpoint. Every eligible call of the seven baselines on V$^\ast$, $1{,}835$ in all, is re-decoded with the real crop replaced by each of its three random patches in turn, $5{,}505$ draws, and we record whether the decoded answer changes. Table~\ref{tab:answerlevel} gives the rates. Real calls change the answer in a third of draws, sixteen times the rate of waste. Performative calls change it at the rate of waste, $2.2\%$ against $2.0\%$, although their decision value is positive: a reward that checks the decision value alone credits them, and their returned pixels move the answer as rarely as a wasted call's. Unnecessary calls sit in between, on the smallest of the four regions. The endpoint is new but the regions and the intervention are ours, so this is corroboration rather than an independent test, and a changed answer is not necessarily a corrected one.
\begin{table}[h]
\begin{minipage}[c]{0.62\linewidth}
\caption{Answer-level counterfactual on V$^\ast$, pooled over the seven baselines. Each executed call is re-decoded once per random patch; \emph{changed} is the share of draws whose decoded answer differs from the one the real crop produced.}
\label{tab:answerlevel}
\end{minipage}\hfill
\begin{minipage}[c]{0.34\linewidth}
\centering\small
\begin{tabular}{lr}
\toprule
\textbf{Region} & \textbf{Changed (\%)} \\
\midrule
Real & 33.3 \\
Performative & 2.2 \\
Unnecessary & 17.1 \\
Waste & 2.0 \\
\bottomrule
\end{tabular}
\end{minipage}
\end{table}

\subsection{Behaviour behind Figure~\ref{fig:interventions}} \label{sec:F2}
Table~\ref{tab:cropswap} lists the accuracies plotted as gaps in Figure~\ref{fig:interventions}(a), on V$^\ast$ and on HR-Bench-4K. The two benchmarks agree on the reading: the real crop moves the two baselines by at most 1.1 points either way, while it is worth 6.8 points to \method{} on V$^\ast$ and 4.3 on HR-Bench-4K, three fifths of the distance from answering at once to the full rollout on the first and two thirds on the second. Table~\ref{tab:strata} lists the tercile values plotted in panels (b) and (c).

\begin{table}[h]
\caption{Accuracy when the trajectory is stopped at its first tool call and the answer is elicited from the cached prefix, for the arms of Table~\ref{tab:ablation}. No call: the call is removed. Random: the call returns a random same-size patch, averaged over three patches. Real: the call returns its real crop. Full: the complete rollout.}
\label{tab:cropswap}
\centering\small
\setlength{\tabcolsep}{6pt}
\begin{tabular}{lcccc}
\toprule
\textbf{Model} & \textbf{No call} & \textbf{Random} & \textbf{Real} & \textbf{Full} \\
\midrule
\rowcolor{blockbg}\multicolumn{5}{l}{\textbf{\textit{V$^\ast$}}} \\
SFT (cold start) & 75.4 & 73.8 & 75.4 & 71.7 \\
Outcome-only GRPO & 77.5 & 74.7 & 76.4 & 80.1 \\
\rowcolor{oursbg}CounterCredit & 78.5 & 74.3 & 85.3 & 89.5 \\
\rowcolor{blockbg}\multicolumn{5}{l}{\textbf{\textit{HR-Bench-4K}}} \\
SFT (cold start) & 71.6 & 68.9 & 71.9 & 70.4 \\
Outcome-only GRPO & 73.2 & 69.9 & 72.9 & 73.9 \\
\rowcolor{oursbg}CounterCredit & 73.5 & 70.5 & 77.8 & 80.2 \\
\bottomrule
\end{tabular}
\end{table}

\begin{table}[h]
\caption{V$^\ast$ by the target's share of the image area, split into terciles at 0.031\% and 0.110\% of the image, about 64 questions each: accuracy (\%) and calls per question for the same three arms.}
\label{tab:strata}
\centering\small
\setlength{\tabcolsep}{6pt}
\begin{tabular}{lcccccc}
\toprule
 & \multicolumn{2}{c}{\textbf{Smallest}} & \multicolumn{2}{c}{\textbf{Middle}} & \multicolumn{2}{c}{\textbf{Largest}} \\
\cmidrule(lr){2-3}\cmidrule(lr){4-5}\cmidrule(lr){6-7}
\textbf{Model} & \textbf{Acc.} & \textbf{Calls} & \textbf{Acc.} & \textbf{Calls} & \textbf{Acc.} & \textbf{Calls} \\
\midrule
SFT (cold start) & 65.1 & 3.33 & 70.3 & 2.84 & 79.7 & 1.88 \\
Outcome-only GRPO & 66.7 & 2.38 & 89.1 & 1.77 & 84.4 & 1.39 \\
\rowcolor{oursbg}CounterCredit & 82.5 & 2.19 & 92.2 & 1.70 & 93.8 & 1.45 \\
\bottomrule
\end{tabular}
\end{table}

\subsection{Generalization across base models}\label{sec:F3}
Table~\ref{tab:base-full} gives both bases on all seven benchmarks; both rise on all seven, so the recipe does not trade general competence for tool use.

\begin{table}[h]
\caption{Two base models on all seven benchmarks (accuracy \%). Base rows answer directly; + rows apply our cold start and RL; no reward ablation was run on Qwen3-VL-8B.}
\label{tab:base-full}
\centering\footnotesize
\setlength{\tabcolsep}{4pt}
\begin{tabular}{lcccccccc}
\toprule
\textbf{Base / Method} & \textbf{V$^\ast$} & \textbf{HR-4K} & \textbf{HR-8K} & \textbf{MME-RW} & \textbf{MMStar} & \textbf{ChartQA} & \textbf{BLINK} & \textbf{Avg.} \\
\midrule
Qwen2.5-VL-7B & 77.0 & 69.1 & 65.6 & 58.0 & 64.4 & 83.9 & 56.2 & 67.7 \\
\rowcolor{oursbg}\quad + \method{} & 89.5 & 80.2 & 76.4 & 67.7 & 68.9 & 87.1 & 62.8 & 76.1 \\
\midrule
Qwen3-VL-8B & 85.9 & 79.2 & 74.3 & 60.7 & 70.5 & 88.4 & 68.9 & 75.4 \\
\rowcolor{oursbg}\quad + \method{} & 91.6 & 85.4 & 84.0 & 70.1 & 72.2 & 90.0 & 72.3 & 80.8 \\
\bottomrule
\end{tabular}
\end{table}

\subsection{Seed variation}\label{sec:seed variation}
Table~\ref{tab:seeds} lists the three \method{} seeds behind the $\pm$ values of Table~\ref{tab:main}; the seven-benchmark average of each seed is computed from its own row, and the standard deviation is the sample standard deviation over the three.

\begin{table}[h]
\caption{Three training seeds of \method{}, evaluated identically. Seed 0 is the run reported everywhere else in the paper.}
\label{tab:seeds}
\centering\footnotesize
\setlength{\tabcolsep}{3pt}
\begin{tabular}{lcccccccccc}
\toprule
\textbf{Seed} & \textbf{V$^\ast$} & \textbf{HR-4K} & \textbf{HR-8K} & \textbf{MME-RW} & \textbf{MMStar} & \textbf{ChartQA} & \textbf{BLINK} & \textbf{Avg.} & \textbf{Calls} & \textbf{Spur.\,\%} \\
\midrule
\rowcolor{oursbg}0 & 89.5 & 80.2 & 76.4 & 67.7 & 68.9 & 87.1 & 62.8 & 76.1 & 1.78 & 31.4 \\
1 & 90.1 & 80.9 & 77.2 & 68.2 & 68.5 & 86.8 & 63.4 & 76.4 & 1.84 & 33.3 \\
2 & 88.5 & 79.4 & 75.4 & 67.2 & 69.2 & 87.4 & 62.1 & 75.6 & 1.72 & 29.5 \\
\midrule
mean & 89.4 & 80.2 & 76.3 & 67.7 & 68.9 & 87.1 & 62.8 & 76.0 & 1.78 & 31.4 \\
sample SD & 0.81 & 0.75 & 0.90 & 0.50 & 0.35 & 0.30 & 0.65 & 0.40 & 0.06 & 1.90 \\
\bottomrule
\end{tabular}
\end{table}

\subsection{Sensitivity of the evidence value}\label{sec:table13}
Table~\ref{tab:probe-sens} varies the two design choices behind $E_i$ on the cold-start audit calls, the number of random patches $K$ and the reference distribution. Each row re-scores the same calls with the same prefixes and real crops and changes only the reference; used means $E_i>\varepsilon$, agreement is the share of calls whose used label matches the default, and $r$ is the correlation of the re-scored $E_i$ with the default.

\begin{table}[h]
\caption{Sensitivity of the evidence value to the number of random patches and to the reference distribution, on the cold-start audit calls. Default: $K=3$ same-size random patches. Agreement: share of calls with the same used label as the default. $r$: Pearson correlation between the re-scored $E_i$ and the default $E_i$. $K=1$ and $K=2$ are averaged over the three choices of the retained patch or pair; $K=5$ adds two fresh patches to the stored three. Blurred real crop: the real crop after Gaussian blur, at its own size. Shifted patch: the stored random patches that overlap the real crop with intersection over union between 0 and 0.5, available for a subset of calls.}
\label{tab:probe-sens}
\centering\footnotesize
\setlength{\tabcolsep}{4pt}
\begin{tabular}{lcccccc}
\toprule
& \multicolumn{3}{c}{\textbf{V$^\ast$}} & \multicolumn{3}{c}{\textbf{HR-4K}} \\
\cmidrule(lr){2-4}\cmidrule(lr){5-7}
\textbf{Reference} & \textbf{Used \%} & \textbf{Agreement \%} & \textbf{$r$} & \textbf{Used \%} & \textbf{Agreement \%} & \textbf{$r$} \\
\midrule
Default, $K=3$ random patches & 15.1 & 100 & 1.00 & 18.2 & 100 & 1.00 \\
$K=1$ & 16.4 & 94.7 & 0.94 & 18.5 & 94.8 & 0.96 \\
$K=2$, leave one out & 15.7 & 97.5 & 0.98 & 18.6 & 96.7 & 0.99 \\
$K=5$ & 14.9 & 97.8 & 0.99 & 18.1 & 97.1 & 0.99 \\
Gray patch & 13.7 & 90.0 & 0.84 & 14.8 & 90.0 & 0.92 \\
Blurred real crop & 10.9 & 88.5 & 0.79 & 13.5 & 89.5 & 0.87 \\
Shifted patch & 17.4 & 97.5 & 0.97 & 18.5 & 95.9 & 0.96 \\
\bottomrule
\end{tabular}
\end{table}

The two benchmarks agree. Dropping two of the three patches keeps 94.7\% and 94.8\% of the used labels on V$^\ast$ and HR-Bench-4K, with correlations of 0.94 and 0.96 to the default, dropping one keeps 97.5\% and 96.7\% with 0.98 and 0.99, and adding two more keeps 97.8\% and 97.1\% with 0.99 on both, so further patches change few labels. The blurred real crop changes the most labels, followed by the gray patch: the blur keeps layout and colour, so a call whose evidence survives the blur is no longer counted as used, and the gray patch, which on average scores as a random patch does, disagrees with the default call by call. A patch that overlaps the real crop agrees with the default about as closely as leaving one patch out, so the label does not depend on the patches being far from the evidence.

\subsection{Extended ablation}
Table~\ref{tab:ablation-full} adds one rent arm and four hyperparameter arms to Table~\ref{tab:ablation}, each trained from the same cold start with the same prompt order, seed, group size, and number of steps as the reference run.

\begin{table}[h]
\caption{Extended ablation: accuracy (\%), calls per question, and spurious-call rate (\%) on V$^\ast$ and HR-Bench-4K. Reference: the CounterCredit run of Table~\ref{tab:ablation}. The rent arm changes where $\rho$ is charged; the sweeps vary one constant with the others fixed at $\gamma=0.5$, $\rho=0.05$.}
\label{tab:ablation-full}
\centering\footnotesize
\setlength{\tabcolsep}{3.5pt}
\begin{tabular}{llcccccc}
\toprule
 &  & \multicolumn{2}{c}{\textbf{Accuracy}} & \multicolumn{2}{c}{\textbf{Calls / question}} & \multicolumn{2}{c}{\textbf{Spurious \%}} \\
\cmidrule(lr){3-4}\cmidrule(lr){5-6}\cmidrule(lr){7-8}
\textbf{Block} & \textbf{Arm} & \textbf{V$^\ast$} & \textbf{HR-4K} & \textbf{V$^\ast$} & \textbf{HR-4K} & \textbf{V$^\ast$} & \textbf{HR-4K} \\
\midrule
Reference & CounterCredit & 89.5 & 80.2 & 1.78 & 1.76 & 31.4 & 35.1 \\
\midrule
Rent & rent only on correct trajectories & 87.4 & 78.5 & 2.27 & 2.23 & 44.1 & 47.2 \\
\midrule
\multirow{2}{*}{Cashback $\gamma$} & $\gamma=0.25$ & 87.4 & 78.6 & 1.57 & 1.55 & 36.2 & 39.7 \\
 & $\gamma=1.0$ & 88.0 & 79.1 & 2.05 & 2.02 & 38.4 & 41.8 \\
\midrule
\multirow{2}{*}{Rent $\rho$} & $\rho=0.02$ & 86.4 & 76.2 & 2.20 & 2.17 & 42.3 & 45.4 \\
 & $\rho=0.1$ & 86.9 & 78.4 & 1.50 & 1.48 & 34.7 & 38.1 \\
\bottomrule
\end{tabular}
\end{table}

Charging rent only on correct trajectories costs the most selectivity. A wrong trajectory then pays nothing for its calls, so the price channel ranks it above a correct trajectory that carries one unverified call, and both the call count and the spurious-call rate move toward the arm without rent. The four sweeps put the reference constants at the accuracy optimum, and every departure costs 1.5 to 3.1 points on V$^\ast$. The two arms that share the ratio of $\gamma$ to $\rho$, $\gamma=0.25$ and $\rho=0.1$, behave alike: the ratio fixes how the price channel orders the rollouts of a group, and the two arms differ only in the weight of that channel against the outcome channel; both cut calls and raise the spurious share. Doubling $\gamma$ or lowering $\rho$ to $0.02$ moves the ratio the other way, and both arms add calls and raise the spurious share further.

\subsection{Case study}
\label{app:case}
\begin{wrapfigure}{r}{0.38\linewidth}
\vspace{-1.2em}
\centering
\includegraphics[width=\linewidth]{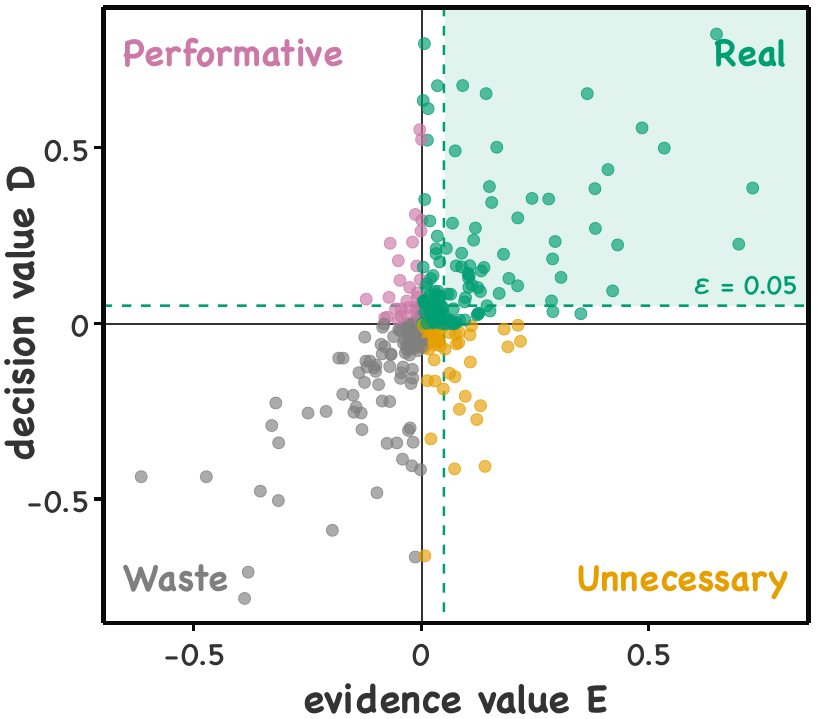}
\caption{Calls of the cold-start checkpoint on V$^\ast$, the first six per trajectory, in the plane of Figure~\ref{fig:audit}. Dashed lines mark $\varepsilon$; the shaded region is where a call on a correct trajectory verifies.}
\label{fig:scatter}
\end{wrapfigure}
Four calls of the cold-start checkpoint on V$^\ast$, one per region of Figure~\ref{fig:audit}; Figure~\ref{fig:scatter} shows where all of its calls fall. Each box shows the original image with the real crop in orange and the three random same-size patches in dashed blue, then the five observations the probe scores with their gold-answer scores $U$, the no-call score above the original image, and the trajectory verbatim with the tool-return turns abbreviated. In each box, $h_i$ is everything before the assistant turn that issues the call, $c_i$ is the orange \texttt{<grounding>} block, and $o_i$ is the crop in the tool turn that follows.

Most calls sit within a few hundredths of the origin, where neither test separates them from noise, and the real region thins out along the diagonal as both values grow together. Performative calls form a narrow band just left of the vertical axis, where random patches score as well as the real crop, and unnecessary calls a band just below the horizontal axis, where the crop beats random patches but not answering at once; waste spreads down the diagonal, calls that hurt both ways.

\paragraph{Case 1, real: needed and used.} A real call. Without the call the policy leans to the wrong side; the returned crop shows the cart to the left of the dog and moves the gold-answer score from 0.29 to 0.85, while three random same-size patches move it only to 0.37. The call is verified, $D_i=0.56$ and $E_i=0.48$, and earns cashback.
\begin{casebox}{Case 1 \quad real: needed and used \quad V$^\ast$ question 145}
\begin{center}\includegraphics[width=\linewidth]{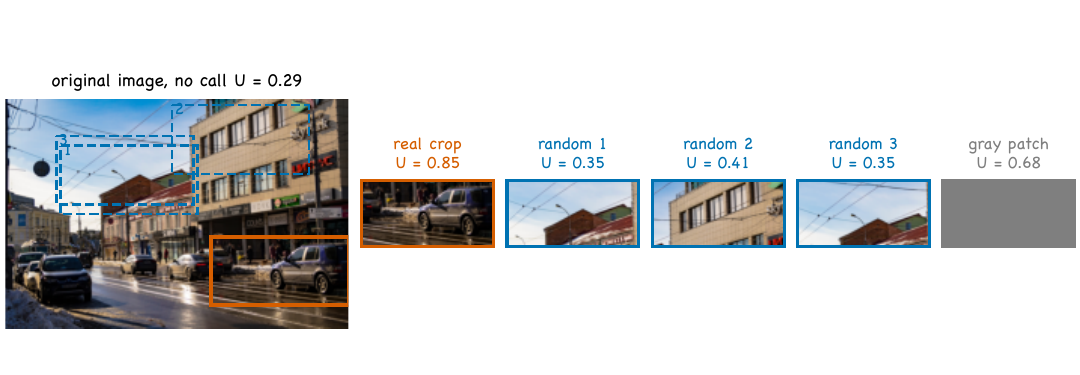}\end{center}
{\small\textbf{Question}\quad Is the shopping cart on the left or right side of the dog? \quad A. left \quad B. right \quad Gold: A}
\dashsep
{\small\textbf{Assistant}}
\begin{lstlisting}
<think> The user is asking about the relative position of a shopping cart and a dog. I need to locate both objects in the image to answer the question. I can see a dog on the sidewalk on the right side of the image, near some parked cars and people. I can also see a shopping cart next to the dog. To confirm their positions relative to each other, I will zoom in on the area where they are located. This area is on the right side of the image, so I'll create a bounding box around that region. A good bounding box would be from approximately 60% to 100% of the image width and 60% to 90% of the image height. </think><grounding>{"bbox_2d": [0.6,0.6,1.0,0.9], "source": "original_image"}</grounding>
\end{lstlisting}
\dashsep
{\small\textbf{Tool}\quad [Observation 1: the real crop shown above, 900 by 450 pixels at (1350, 900) to (2250, 1350) of the original image; template in Appendix~\ref{app:prompts}]}
\dashsep
{\small\textbf{Assistant}}
\begin{lstlisting}
<think> In this zoomed-in view, I can clearly see the dog and the shopping cart. The dog is standing on the snow-covered sidewalk. To the left of the dog, there is a yellow shopping cart. Therefore, the shopping cart is on the left side of the dog. The question asks for the left or right side, and based on this observation, the shopping cart is on the left side of the dog. I can confidently answer the question now. </think><answer> A. left </answer>
\end{lstlisting}
\dashsep
{\small\textbf{Final answer}\quad A. left, correct. \quad $U_{\mathrm{now}}=0.29$, $\bar U_{\mathrm{rand}}=0.37$, $U_{\mathrm{real}}=0.85$, $D_i=0.56$, $E_i=0.48$}
\end{casebox}

\paragraph{Case 2, performative: needed but not used.} A performative call. The crop raises the gold-answer score from 0.65 to 0.88, so the look was needed, but the three random patches raise it to 0.95: receiving any observation helps the policy commit, and the pixels of the real crop add nothing. $D_i=0.23$, $E_i=-0.07$; the call pays rent, and a reward on the decision value alone would have credited it.
\begin{casebox}{Case 2 \quad performative: needed but not used \quad V$^\ast$ question 130}
\begin{center}\includegraphics[width=\linewidth]{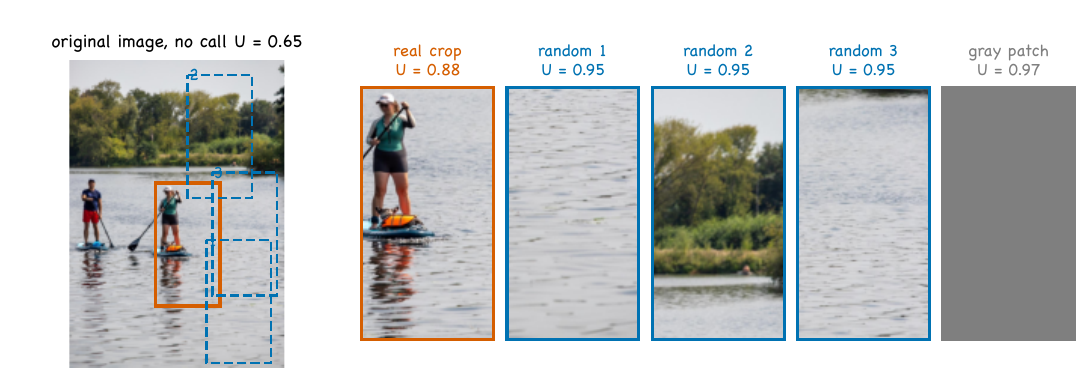}\end{center}
{\small\textbf{Question}\quad Is the wristband on the left or right side of the dog? \quad A. right \quad B. left \quad Gold: B}
\dashsep
{\small\textbf{Assistant}}
\begin{lstlisting}
<think> The user is asking about the location of a wristband relative to a dog in the image. First, I need to locate the dog. I can see a small dog on a paddleboard in the center of the image. The dog is wearing an orange life vest. Next, I need to find the wristband. There is a person standing on the paddleboard with the dog, and this person is wearing a black wristband on their left wrist. The dog is on the right side of the person's body. Therefore, the wristband is on the left side of the dog. To confirm this, I will zoom in on the person and the dog. The area of interest is in the center of the image, so I'll create a bounding box around them. </think><grounding>{"bbox_2d": [0.4,0.4,0.7,0.8], "source": "original_image"}</grounding>
\end{lstlisting}
\dashsep
{\small\textbf{Tool}\quad [Observation 1: the real crop shown above, 450 by 859 pixels at (600, 859) to (1050, 1718) of the original image; template in Appendix~\ref{app:prompts}]}
\dashsep
{\small\textbf{Assistant}}
\begin{lstlisting}
<think> In this zoomed-in view, I can clearly see the woman and the dog on the paddleboard. The woman is wearing a black wristband on her left wrist. The dog is on the paddleboard to the right of her legs. Therefore, the wristband is to the left of the dog. The question asks if the wristband is on the left or right side of the dog. Based on this observation, it is on the left side. The answer is B. </think><answer> B. left </answer>
\end{lstlisting}
\dashsep
{\small\textbf{Final answer}\quad B. left, correct. \quad $U_{\mathrm{now}}=0.65$, $\bar U_{\mathrm{rand}}=0.95$, $U_{\mathrm{real}}=0.88$, $D_i=0.23$, $E_i=-0.07$}
\end{casebox}

\paragraph{Case 3, used but not needed.} A call whose pixels matter but whose look was not needed. The policy already answers correctly from the full image, 0.97, and the real crop keeps it there, 0.97, while random patches pull it down to 0.76. $D_i=-0.004$ and $E_i=0.21$; the call pays rent because it changed nothing.
\begin{casebox}{Case 3 \quad used but not needed \quad V$^\ast$ question 20}
\begin{center}\includegraphics[width=\linewidth]{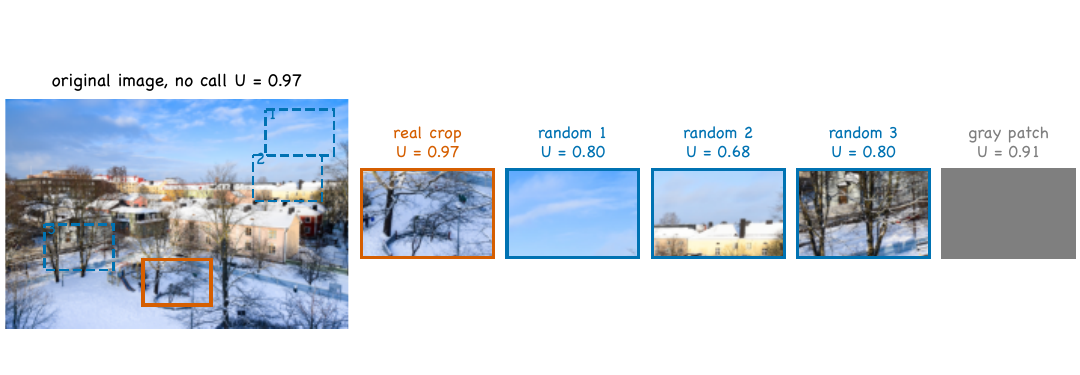}\end{center}
{\small\textbf{Question}\quad What is the color of the shovel? \quad A. yellow \quad B. red \quad C. blue \quad D. black \quad Gold: C}
\dashsep
{\small\textbf{Assistant}}
\begin{lstlisting}
<think> The user is asking for the color of a shovel. I need to first locate the shovel in the image. The image shows a snowy park with a playground. I can see a person standing near the playground equipment. This person might be holding a shovel. I will zoom in on the area around the person to get a better look. The person is located in the lower central part of the image. I will crop the region around this person to see if there is a shovel. The coordinates for the crop will be approximately [0.4, 0.7, 0.6, 0.9]. </think><grounding>{"bbox_2d": [0.4,0.7,0.6,0.9], "source": "original_image"}</grounding>
\end{lstlisting}
\dashsep
{\small\textbf{Tool}\quad [Observation 1: the real crop shown above, 450 by 300 pixels at (900, 1050) to (1350, 1350) of the original image; template in Appendix~\ref{app:prompts}]}
\dashsep
{\small\textbf{Assistant}}
\begin{lstlisting}
<think> The user wants to know the color of the shovel. In the previous step, I zoomed in on a person near the playground. In observation 1, I can see a person on the left side of the image. This person seems to be holding a blue object. This object could be a shovel. I need to zoom in further to confirm the object and its color. The person and the object are located in the bottom-left corner of observation 1. I will crop this area to get a clearer view. The coordinates for the crop will be approximately [0.0, 0.5, 0.4, 1.0]. </think><grounding>{"bbox_2d": [0.0,0.5,0.4,1.0], "source": "observation_1"}</grounding>
\end{lstlisting}
\dashsep
{\small\textbf{Tool}\quad [Observation 2: a 180 by 150 pixel crop of observation 1; template in Appendix~\ref{app:prompts}]}
\dashsep
{\small\textbf{Assistant}}
\begin{lstlisting}
<think> The user wants to know the color of the shovel. In the previous step, I zoomed in on a person near the playground. In observation 2, I can see a person on the left side of the image. This person is wearing a blue jacket and is holding a blue object. This object is a shovel. I can clearly see the color of the shovel is blue. The question asks for the color of the shovel, and the options are A. yellow, B. red, C. blue, D. black. The color of the shovel is blue, which matches option C. Therefore, I can provide the final answer. </think><answer> C. blue </answer>
\end{lstlisting}
\dashsep
{\small\textbf{Final answer}\quad C. blue, correct. \quad $U_{\mathrm{now}}=0.97$, $\bar U_{\mathrm{rand}}=0.76$, $U_{\mathrm{real}}=0.97$, $D_i=-0.004$, $E_i=0.21$}
\end{casebox}

\paragraph{Case 4, neither needed nor used.} A harmful call. The policy is right before it looks, 0.96, and wrong after the crop returns, 0.18; random patches leave it at 0.57. $D_i=-0.78$ and $E_i=-0.39$. The crop shows the boat without the man in green shorts, the reasoning talks itself out of the correct answer, and the trajectory ends wrong; the call pays rent, and an outcome reward would not have scored it at all.
\begin{casebox}{Case 4 \quad neither needed nor used \quad V$^\ast$ question 142}
\begin{center}\includegraphics[width=\linewidth]{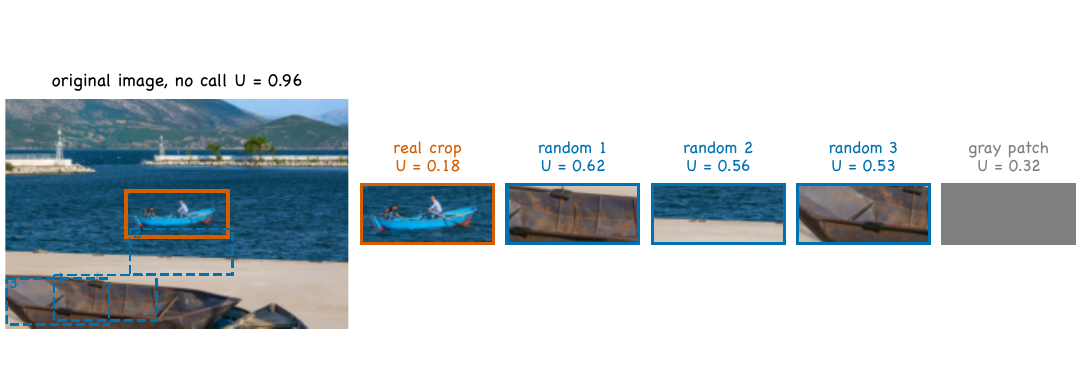}\end{center}
{\small\textbf{Question}\quad Is the man with green shorts on the left or right side of the blue boat? \quad A. right \quad B. left \quad Gold: B}
\dashsep
{\small\textbf{Assistant}}
\begin{lstlisting}
<think> The user is asking about the relative position of a man with green shorts to a blue boat. First, I need to locate the blue boat in the image. I can see a blue boat in the middle of the water, with two men on it. The question asks if the man with green shorts is on the left or right side of this blue boat. I need to examine the people on the boat to determine the color of their shorts. The people on the boat are quite small, so I will need to zoom in to see the details clearly. I will crop the region containing the blue boat to get a better view. The boat is located in the center of the image, so I'll create a bounding box around it. A good starting point for the bounding box would be [0.4, 0.45, 0.6, 0.6] to focus on the boat and its occupants. </think><grounding>{"bbox_2d": [0.4,0.45,0.6,0.6], "source": "original_image"}</grounding>
\end{lstlisting}
\dashsep
{\small\textbf{Tool}\quad [Observation 1: a 450 by 225 pixel crop of original image; template in Appendix~\ref{app:prompts}]}
\dashsep
{\small\textbf{Assistant}}
\begin{lstlisting}
<think> I have successfully zoomed in on the blue boat. In this new observation, I can clearly see two men. The man on the left is wearing blue jeans. The man on the right is wearing a light-colored shirt and dark pants. The question asks about a man with green shorts. I do not see anyone wearing green shorts on this boat. It is possible that the user is mistaken about the color of the shorts, or there is another person not visible in this crop. However, based on the information available in this observation, I cannot confirm the presence of a man with green shorts. The man on the left is on the left side of the boat, and the man on the right is on the right side. Since the question asks for the side relative to the boat, and there is a man on the right, the answer would be "right". However, I cannot be certain about the color of the shorts. I will go back to the original image and try to get a better view of the people on the boat, perhaps by adjusting the bounding box to include more of the surrounding area which might give more context. </think><grounding>{"bbox_2d": [0.35,0.4,0.65,0.6], "source": "original_image"}</grounding>
\end{lstlisting}
\dashsep
{\small\textbf{Tool}\quad [Observation 2: the real crop shown above, 674 by 300 pixels at (788, 600) to (1462, 900) of the original image; template in Appendix~\ref{app:prompts}]}
\dashsep
{\small\textbf{Assistant}}
\begin{lstlisting}
<think> I have examined the two men on the blue boat. The man on the left is wearing blue jeans, and the man on the right is wearing dark pants. The question asks if there is a man with green shorts. I have carefully inspected the image, and there is no one wearing green shorts. It is possible the user is mistaken. However, the question asks for the position of the man with green shorts relative to the boat. Since there is no such person, the question is based on a false premise. I will assume the user is asking about the position of the man who *could be* on the right side of the boat. There is a man on the right side of the boat. Therefore, the man with green shorts (if he existed) would be on the right side of the boat. The answer is 'right'. I am confident in this answer as it is the only logical interpretation of the question given the visual evidence. </think><answer> right </answer>
\end{lstlisting}
\dashsep
{\small\textbf{Final answer}\quad right, wrong. \quad $U_{\mathrm{now}}=0.96$, $\bar U_{\mathrm{rand}}=0.57$, $U_{\mathrm{real}}=0.18$, $D_i=-0.78$, $E_i=-0.39$}
\end{casebox}

\section{Further Limitations}
\label{app:limits}
Per-call values do not sum to the value of the trajectory. The evidence value requires a matched null return, which we define only for crops. Verification weakens when the gold-answer score saturates, which also inflates the spurious-call rate of an agent scored by its own saturated distribution, although the common and outside scorers of Appendix~\ref{app:results} preserve the ordering of agents. Charging rent on unnecessary calls, informative pixels with $D_i\leq\varepsilon$ as in Case~3 of Appendix~\ref{app:case}, penalizes confirmatory looks; whether such looks aid robustness under distribution shift was not tested. Random patches are unscreened, so a patch that contains the target raises the reference mean and lowers $E_i$; the reference errs toward under-verification. Multiple-choice tasks use a renormalized probability and free-form tasks a log-probability, and the free-form deadzone is calibrated separately, Appendix~\ref{app:config}. Reward construction costs $K+2$ score evaluations per eligible call on a correct trajectory, and the probes run only during training. Open questions are tools that return text, evidence accumulated across several calls, further base families, and longer training in which the policy could adapt to the probe.

\end{document}